\renewcommand\AB@affilsepx{, \protect\Affilfont}
\newcommand\blfootnote[1]{%
  \begingroup
  \renewcommand\thefootnote{}\footnote{#1}%
  \addtocounter{footnote}{-1}%
  \endgroup
}
\declaretheorem[name=Definition,numberwithin=section]{definition}
\declaretheorem[name=Lemma,numberwithin=section]{lemma}
\newtheorem*{remark}{Remark}
\newcommand{\expt}[2][]{\ensuremath{\mathbb{E}_{#1}\left[#2\right]}}
\newcommand{\uni}[3]{Uni(#1:#2 \backslash #3)}
\newcommand{\red}[2]{Red(#1:#2)}
\newcommand{\syn}[2]{Syn(#1:#2)}
\newcommand{\mi}[3][]{I_{#1}(#2;#3)}
\DeclareMathOperator*{\minimize}{minimize}
\title{Quantifying Knowledge Distillation Using\\Partial Information Decomposition}
\author[1]{Pasan Dissanayake$^*$}
\author[1]{Faisal Hamman}
\author[1]{Barproda Halder}
\author[2]{Ilia Sucholutsky}
\author[3]{\authorcr Qiuyi Zhang}
\author[1]{Sanghamitra Dutta}
\affil[1]{\emph{University of Maryland}}
\affil[2]{\emph{Princeton University}}
\affil[3]{\emph{Google Research}}
\date{}  
\renewcommand\Affilfont{\itshape\small}
\begin{document}

\maketitle

\begin{abstract}
Knowledge distillation deploys complex machine learning models in resource-constrained environments by training a smaller student model to emulate internal representations of a complex teacher model. However, the teacher's representations can also encode nuisance or additional information not relevant to the downstream task. Distilling such irrelevant information can actually impede the performance of a capacity-limited student model. This observation motivates our primary question: \emph{What are the information-theoretic limits of knowledge distillation?} To this end, we leverage Partial Information Decomposition to quantify and explain the transferred knowledge and knowledge left to distill for a downstream task. We theoretically demonstrate that the task-relevant transferred knowledge is succinctly captured by the measure of \emph{redundant} information about the task between the teacher and student. We propose a novel multi-level optimization to incorporate redundant information as a regularizer, leading to our framework of \emph{Redundant Information Distillation (RID)}. RID leads to more resilient and effective distillation under nuisance teachers as it succinctly quantifies task-relevant knowledge rather than \emph{simply aligning student and teacher representations.}

\end{abstract}

\blfootnote{\hspace{-0.5cm}Accepted at the 28\textsuperscript{th} International Conference on Artificial
  Intelligence and Statistics (AISTATS) 2025,  Mai Khao, Thailand. $^1$\{pasand, fhamman, bhalder, sanghamd\}@umd.edu \; $^2$ is2961@princeton.edu \; $^3$ qiuyiz@google.com}

\section{Introduction}
Modern-day machine learning requires large amounts of compute for both training and inference. Knowledge distillation~\cite{hintonDistillation, romeroFitnets} can be used to compress a complex machine learning model (the teacher) by distilling it into a relatively simpler model (the student). The term ``distillation'' in this context means obtaining some assistance from the teacher while training the student so that the student performs much better than when trained alone (see Figure \ref{fig_knowledgeDistillation}). In its earliest forms, knowledge distillation involved the student trying to match the output logits of the teacher~\cite{hintonDistillation}. More advanced methods focus on distilling multiple intermediate representations of the teacher to the corresponding layers of the student \cite{romeroFitnets, variationalKD, contrastiveRepDistillation, taskAware}. We also refer the reader to \cite{gouKDSurvey,sucholutsky2023getting} for surveys. 

Information theory has been instrumental in both designing \cite{variationalKD, contrastiveRepDistillation} and explaining \cite{zhangKnowledgePoints, wangDistillCheckpoints} knowledge distillation techniques. However, less attention has been given to characterizing the fundamental limits of the process from an information-theoretical perspective. Our goal is to bridge this gap by {\it first introducing new measures to quantify the ``transferred knowledge'' and ``knowledge to distill'' for a teacher and a student model given a target downstream task}. We bring in an emerging body of work called Partial Information Decomposition (PID) \cite{williamBeerPID, griffithIntersectionInfo, bertschingerUniqueInfo} to explain knowledge distillation. We define the knowledge to distill using the PID measure of ``unique'' information about the task that is available only with the teacher but not the student. As it follows, the transferred knowledge is succinctly quantified by the measure of ``redundant'' information that is common between the teacher and student. 

We propose a multi-level optimization that maximizes redundant information (transferred knowledge) as a regularizer for more effective distillation. While PID has been explored in a few avenues of machine learning, 
\begin{wrapfigure}[17]{r}{0.4\textwidth}
  \vspace{-0.5cm}
  \begin{center}  
  \includegraphics[width=0.3\textwidth]{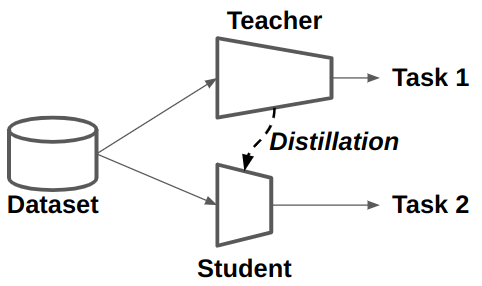}
  \end{center}
  \caption{Knowledge Distillation: The teacher (a complex model) assists the student (usually a substantially simpler model) during their training. The learned student can perform much better than an independently trained student without distillation with a similar training setup (i.e., hyper-parameters and data). The teacher may or may not have been trained for the same task as the student.
  \label{fig_knowledgeDistillation}}
\end{wrapfigure}
it has remained a challenge to maximize these measures as a regularizer since computing them itself requires solving an optimization. Our optimization leads to a novel knowledge distillation framework -- Redundant Information Distillation (RID) -- which precisely captures task-relevant knowledge and filters out the task-irrelevant information from the teacher. In summary, our main contributions are as follows:

\textbullet \; \textbf{Quantifying transferred knowledge and knowledge to distill:} Given a downstream task, and a teacher and a student model, we formally define the knowledge to distill as the unique information in the teacher (Definition~\ref{def_distillabledKnowledge}) and the transferred knowledge as the redundant information (Definition~\ref{def_distilledKnowledge}). Through examples and theoretical results (Theorem~\ref{thm_NuisanceTeacher}), we first show that redundant information succinctly captures task-related knowledge transferred to the student as opposed to the existing frameworks which directly align teacher ($T$) and student ($S$) representations, e.g., Variational Information Distillation (VID)~\cite{variationalKD} maximizes the mutual information $\mi{T}{S}$. Theorem~\ref{thm_NuisanceTeacher} points out a fundamental limitation of the existing knowledge distillation frameworks for capacity-limited students: they blindly align the student and the teacher without precisely capturing the task-related knowledge.

\textbullet \; \textbf{Maximizing redundant information as a regularizer:} To alleviate this limitation, we propose a strategy to incorporate redundant information as a regularizer during model distillation to precisely maximize the task-relevant transferred knowledge. We first circumvent the challenge of computing the redundant information measure proposed in \cite{bertschingerUniqueInfo} by utilizing the quantity termed intersection information defined in \cite{griffithRedInfo}, which we prove (in Theorem \ref{thm_griffithBrojaRed}) to be a lower-bound for redundant information. 
The significance of Theorem~\ref{thm_griffithBrojaRed} is that it enables us to obtain an optimization formulation to maximize a lower-bound of redundant information without making distributional assumptions, a contribution that is also of independent interest outside the domain of knowledge distillation. 

\textbullet \; \textbf{A novel knowledge distillation framework:} We propose a new framework called Redundant Information Distillation (RID) whose distillation loss is tailored to maximize the redundant information (i.e., the transferred knowledge as per Definition \ref{def_distilledKnowledge}). We carry out a number of experiments to demonstrate the advantage of this new framework over VID \cite{variationalKD}, an existing knowledge distillation framework that maximizes the mutual information between the teacher and the student representations (Section \ref{sec_experiments}). Experiments are carried out for the CIFAR10 and the CIFAR100 datasets, as well as for a transfer learning setting where the teacher is trained on the ImageNet dataset and transferred to the students over the CUB-200-2011 dataset. Our framework explains knowledge distillation and shows more resilience under less-informative nuisance teachers.

\textbf{Related Works:}
Multi-layer knowledge distillation was introduced in FitNets \cite{romeroFitnets}. Henceforth, a large number of techniques, based on different statistics derived for matching a teacher-student pair, have been proposed. In particular, \cite{variationalKD, contrastiveRepDistillation, chenWassersteinRepDistillation, milesInformationTheoreticRepDistillation} leverage an information-theoretic perspective to arrive at a solution (also see surveys~\cite{gouKDSurvey,sucholutsky2023getting}). In this paper, we focus on VID \cite{variationalKD} as a representative framework of the larger class of distillation frameworks which maximize $\mi{T}{S}$ as the distillation strategy. We also discuss Task-aware Layer-wise Distillation (TED) \cite{taskAware} as a framework that filters out task-related information. Specifically, \cite{taskAware} highlight the importance of distilling only the task-related information when there is a significant complexity gap between the teacher and the student. Towards this end, \cite{zhuUndistillableClasses} point out the existence of non-distillable classes due to the unmatched capacity of the student model. We discuss some more related works on knowledge distillation~\cite{kunduSkepticalStudent,parkStudentFriendlyTeacher,benItsAllInTheHead,jiaoTinybert,huangGeneric2SpecificDistill} in Appendix \ref{appdx_relatedWorks}.

Information theory has been instrumental towards explaining the success of knowledge distillation. \cite{wangDistillCheckpoints} utilize information bottleneck principles \cite{tishbyInformationBottleneck, tishbyInformationBottleneckDNN} to explain how a teacher model may assist the student in learning relevant features quickly. \cite{zhangKnowledgePoints} observe the training process as systematically discarding knowledge from the input. Accordingly, the distillation helps the student to quickly learn what information to discard. Despite these attempts, we observe that there exists a gap in characterizing the fundamental limits of knowledge distillation which we seek to address using the mathematical tool of PID.

PID is also beginning to generate interest in other areas of machine learning~\cite{dutta2020information,duttaFairnessFeatureExempt,duttaPIDinFairness,hamman2024demystifying,liangPIDCompute,liangMultimodalWithoutLabel,hamman2024isit,tax2017partial,ehrlich2022partial,wollstadt2023rigorous,mohamadi2023more,venkateshGaussianPIDBias,halder2024quantifying,dewan2024diffusion,venkateshGaussianPID,venkateshGaussianPIDBias,goswami2024analytically,rine,lyuExplicitPID}. However, it has not been leveraged in the context of knowledge distillation before. Additionally, while most related works predominantly focus on efficiently computing PID,~e.g.,~\cite{rine,liangPIDCompute,halder2024quantifying,pakman2021estimating} that itself requires solving an optimization over the joint distribution, \emph{there are limited works that further incorporate it as a regularizer during model training.} \cite{duttaFairnessFeatureExempt} leverage Gaussian assumptions to obtain closed-form expressions for the PID terms, enabling them to use unique information as a regularizer during training for fairness (also see \cite{venkateshGaussianPID,venkateshGaussianPIDBias} for more details on Gaussian PID). Our work makes novel connections between two notions of redundant information and demonstrates how PID can be integrated as a regularizer in a multi-level optimization without Gaussian assumptions, which could also be of independent interest outside the context of knowledge distillation.

\section{Preliminaries}
\textbf{Background on PID:} Partial Information Decomposition (PID), first introduced by \cite{williamBeerPID}, offers a way to decompose the joint information in two sources, say $T$ and $S$, about another random variable $Y$ 
\begin{wrapfigure}[12]{r}{0.4\textwidth}
  \vspace{-0.4cm}
  \centering
  \includegraphics[width=0.33\textwidth]{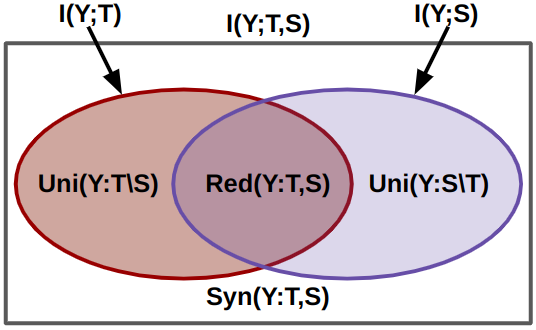}
  \caption{Partial Information Decomposition: The box denotes the total joint mutual information $\mi{Y}{T,S}$, which is decomposed into four non-negative terms named synergistic information $\syn{Y}{T,S}$, redundant information $\red{Y}{T,S}$ and the two unique information terms $\uni{Y}{T}{S}$ and $\uni{Y}{S}{T}$.}
  \label{fig_pid}
\end{wrapfigure}
(i.e., $\mi{Y}{T,S}$ where $\mi{A}{B}$ denotes the mutual information between $A$ and $B$ \cite{thomasCoverInfoTheory}) into four components as follows:
\begin{enumerate}[leftmargin=0.5cm, noitemsep, topsep=0pt]
    \item Unique information $\uni{Y}{T}{S}$ and $\uni{Y}{S}{T}$: information about $Y$ that each source uniquely contains
    \item Redundant information $\red{Y}{T,S}$: the information about $Y$ that both $T$ and $S$ share
    \item Synergistic information $\syn{Y}{T,S}$: the information about $Y$ that can be recovered only by using both $T$ and $S$. 
\end{enumerate}
See Figure \ref{fig_pid} for a graphical representation. These PID components satisfy the relationships given below:
\begin{align}
    \mi{Y}{T,S} &= \uni{Y}{T}{S} + \uni{Y}{S}{T} \nonumber\\
    &\qquad + \red{Y}{T,S} + \syn{Y}{T,S} \\
    \mi{Y}{T} &= \uni{Y}{T}{S} + \red{Y}{T,S} \\
    \mi{Y}{S} &= \uni{Y}{S}{T} + \red{Y}{T,S}.
\end{align}

While this system of equations cannot be solved to arrive at a deterministic definition for each PID term, defining only one of the terms is sufficient to define the rest. Consequently, a wide array of definitions exists, each based on different desired properties
\cite{williamBeerPID, bertschingerUniqueInfo, griffithIntersectionInfo, griffithRedInfo}. Among these, the definition proposed in \cite{bertschingerUniqueInfo} is motivated with an operational interpretation of unique information from decision theory. Moving on to the context of knowledge distillation, we map $T$ to be the teacher representation, $S$ to be the student representation, and $Y$ to be the downstream task that the student is being trained for. That makes $\mi{Y}{T}$ and $\mi{Y}{S}$ be the total knowledge about $Y$ that is in the teacher and in the student, respectively.

\textbf{Notation and Problem Setting:} We consider a layer-wise distillation scheme where the teacher representation $T(X)$ is distilled into the student representation $S_{\eta_s}(X)$, where $X$ is the input. The target of the student is to predict the task $Y$ from $X$. Both $T(\cdot)$ and $S_{\eta_s}(\cdot)$ are deterministic functions of $X$ and the randomness is due to the input being random. Note that the student representation depends on the parameters of the student network denoted by $\eta_s$ and hence written as $S_{\eta_s}$. However, when this parameterization and dependence on $X$ is irrelevant or obvious, we may omit both and simply write $T$ and $S$. We denote the supports of $Y,T$ and $S$ by $\mathcal{Y},\mathcal{T}$ and $\mathcal{S}$, respectively. In general, upper-case letters denote random variables, except $P$ and $Q$ which represent probability distributions, $C, H, W$ which stand for the representation dimensions, and $K$ which indicates the number of layers distilled. Lowercase letters are used for vectors unless specified otherwise. Lowercase Greek letters denote the parameters of neural networks. 

Knowledge distillation is achieved by modifying the student loss function to include a distillation loss term in addition to the ordinary task-related loss as follows:
\begin{equation}
    \mathcal{L}(\eta_s) = \lambda_1 \mathcal{L}_\text{ordinary}(Y,\hat{Y}(X)) + \lambda_2 \mathcal{L}_\text{distill}(Y, \hat{Y}(X), S_{\eta_s}, T).
\end{equation}
Here, $\lambda_1, \lambda_2 >0$. When the task at hand is a classification, $Y$ denotes the true class label and we use the cross entropy loss defined as $\mathcal{L}_{CE}( Y,\hat{Y})=-\expt[P_{X}]{\log P_{\hat{Y}(X)}(Y)}$ as the ordinary task-related loss for the student. Here, $\hat{Y}(X)$ is the student's final prediction of $Y$. The teacher network is assumed to remain unmodified during the distillation process.

\section{Explaining Knowledge Distillation}
\label{sec_quantifyingKD}
In this section, we propose information theoretic metrics to quantify both the task-relevant information that is available in the teacher for distillation, and the amount of information that has already been transferred to the student. We mathematically demonstrate favorable properties of our proposed measures in comparison to other candidate measures. Our mathematical results highlight the limitations of existing knowledge distillation frameworks that naively align the student with the teacher with no regard for task-relevance.

\begin{definition}[Knowledge to distill]
\label{def_distillabledKnowledge}
    Let $Y$, $S$, and $T$ be the target variable, the student's intermediate representation, and the teacher's intermediate representation, respectively. The knowledge to distill from $T$ to $S$ is defined as $\uni{Y}{T}{S}$, the unique information about $Y$ that is in $T$ but not in $S$.
\end{definition}
With the knowledge to distill is defined as the unique information $\uni{Y}{T}{S}$, we see that the more the distillation happens, the more the $\uni{Y}{T}{S}$ shrinks. Note that under the knowledge distillation setting, the total knowledge of the teacher $\mi{Y}{T}$ is constant since the teacher is not modified during the process. Since $\mi{Y}{T} = \uni{Y}{T}{S} + \red{Y}{T,S}$ we therefore propose $\red{Y}{T,S}$ as a measure for knowledge that has already been transferred.
\begin{definition}[Transferred knowledge]
\label{def_distilledKnowledge}
    Let $Y$, $S$, and $T$ be the target variable, student's intermediate representation, and the teacher's intermediate representation, respectively. The transferred knowledge from $T$ to $S$ is defined as $\red{Y}{T,S}$, the redundant information about $Y$ between $T$ and $S$.
\end{definition}
We leverage the unique and redundant information definitions given by \cite{bertschingerUniqueInfo} for an exact quantification of these quantities.
\begin{definition}[Unique and redundant information \cite{bertschingerUniqueInfo}]
\label{def_brojaRedUni}
Let $P$ be the joint distribution of $Y, T$ and $S$, and $\Delta$ be the set of all joint distributions over $\mathcal{Y}\times \mathcal{T}\times \mathcal{S}$. Then,
\begin{align}
    \uni{Y}{T}{S} &:= \min_{Q\in \Delta_P} \mi[Q]{Y}{T|S} \\
    \red{Y}{T,S} &:= \mi{Y}{T} - \min_{Q\in \Delta_P} \mi[Q]{Y}{T|S}
\end{align}
where $\Delta_P=\{Q\in \Delta:Q(Y=y,T=t)=P(Y=y,T=t), Q(Y=y,S=s)=P(Y=y,S=s) \; \forall \; y\in\mathcal{Y}, t\in\mathcal{T} \text{ and }s\in\mathcal{S}\}$ i.e., $\Delta_P$ is the set of all joint distributions with marginals of the pairs $(Y,T)$ and $(Y,S)$ equal to that of $P$.
\end{definition}

\textbf{Comparison to Existing Approaches for Knowledge Distillation:} A multitude of knowledge distillation frameworks exist which are based on maximizing the mutual information between the teacher and the student (i.e., $\mi{T}{S}$) \cite{variationalKD, contrastiveRepDistillation, chenWassersteinRepDistillation, milesInformationTheoreticRepDistillation}. While a distillation loss that maximizes $\mi{T}{S}$ can be helpful to the student when the teacher possesses task-related information, we show that it creates a tension with the ordinary loss when the teacher has little or no task-relevant information. Moreover, even though the teacher contains task-related information, the limited capacity of the student may hinder a proper distillation when this kind of framework is used. The following examples provide critical insights, exposing the limitation of $\mi{T}{S}$. Our proposed measure $\red{Y}{T,S}$ resolves these cases in an explainable manner by succinctly capturing task-relevant knowledge.


\textbf{Example 1:} (Uninformative teacher) 
An uninformative teacher representation (i.e., $T$ with $\mi{Y}{T}=0$) gives $\uni{Y}{T}{S}=\red{Y}{T,S}=0$ for any $S$, agreeing with the intuition. Hence, an algorithm that maximizes exactly the transferred knowledge $\red{Y}{T,S}$ will have a zero gradient over this term. In contrast, algorithms that maximize the similarity between $S$ and $T$ quantified by $\mi{T}{S}$ will force $S$ to mimic the uninformative teacher, causing a performance worse than ordinary training without distillation. As a simplified example, let $U_1, U_2 \sim Ber(0.5)$ and $Y=U_1, T=U_2$. Then, the teacher cannot predict the intended task $Y$. Note that, in this case, $\mi{T}{S}$ is not maximized when the student representation is $S=Y$. Instead, it is maximized when $S = U_2$.

\textbf{Example 2:} (Extra complex teacher) Let $U_1\sim Ber(0.2), U_2\sim Ber(0.5)$ and $Y=U_1, T=(U_1, U_2)$. Then, the teacher can completely predict the intended task $Y$. Assume the student is simpler than the teacher and has only one binary output. In this situation, $\mi{T}{S}$ is not maximized when $S=U_1$ because $\mi{(U_1, U_2)}{U_1}\approx 0.72 < 1 = \mi{(U_1, U_2)}{U_2}$ where the right-hand side is achieved when $S=U_2$. However, $S = U_1$ is a maximizer for $Red(Y:T,S)$ (i.e., $\red{Y}{T,S}=\red{U_1}{T,U_1}=\mi{Y}{T}$). Theorem~\ref{thm_NuisanceTeacher} presents a more general case.

Theorem~\ref{thm_NuisanceTeacher} formally exposes the limitations of maximizing $\mi{T}{S}$ for capacity-limited students as $\mi{T}{S}$ does not emphasize task-related knowledge.

\begin{restatable}[Teacher with nuisance]{theorem}{thmNuisanceTeacher}
    \label{thm_NuisanceTeacher}
    Let $T=(Z, G)$ where $Z$ contains all the task-related information (i.e., $\mi{Y}{T}=\mi{Y}{Z}$) and $G$ does not contain any information about the task (i.e., $\mi{Y}{G}=0$). Let the student be a capacity-limited model as defined by $H(S) \leq \max\{H(Z), H(G)\}$ where $H(X)$ denotes the Shannon entropy of the random variable $X$. Then,
    \begin{enumerate}[label=(\roman*)]
        \item $\mi{T}{S}$ is maximized when
        \begin{equation}
            S = \left\{\begin{matrix}
                Z &; & H(Z) > H(G) \\
                G &; & H(Z) < H(G)
            \end{matrix}\right..
        \end{equation}
        \item $\red{Y}{T,S}$ is always maximized when $S=Z$.
    \end{enumerate}
\end{restatable}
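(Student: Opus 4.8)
The plan is to treat the two parts by different arguments: part~(i) by an entropy-counting upper bound that is then attained, and part~(ii) by the universal inequality $\red{Y}{T,S}\le\mi{Y}{T}$ together with an explicit check that $S=Z$ saturates it.

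For part~(i), I would first observe that every admissible student satisfies $\mi{T}{S}\le H(S)\le\max\{H(Z),H(G)\}$, using the standard bound $I(A;B)\le H(B)$ and then the capacity constraint. It then suffices to exhibit an admissible $S$ meeting this bound. If $H(Z)>H(G)$, take $S=Z$: since $Z$ is a coordinate of $T=(Z,G)$ it is a deterministic function of $X$, hence a legitimate student, and $H(Z\mid T)=0$, so $\mi{T}{Z}=H(Z)=\max\{H(Z),H(G)\}$; feasibility holds since $H(Z)\le\max\{H(Z),H(G)\}$. The case $H(Z)<H(G)$ is symmetric with $S=G$. I would be explicit that this establishes the stated $S$ as \emph{a} maximizer (relabelings of $Z$ or $G$ work too), not necessarily the unique one.

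For part~(ii), I would start from the PID identity $\mi{Y}{T}=\uni{Y}{T}{S}+\red{Y}{T,S}$ and non-negativity of the unique term, giving $\red{Y}{T,S}\le\mi{Y}{T}$ for every $S$ (equivalently $\min_{Q\in\Delta_P}\mi[Q]{Y}{T|S}\ge 0$ in Definition~\ref{def_brojaRedUni}). Then I would show $S=Z$ attains $\mi{Y}{T}$: it is admissible because $H(Z)\le\max\{H(Z),H(G)\}$, and since the true law $P$ lies in $\Delta_P$ we have $\min_{Q\in\Delta_P}\mi[Q]{Y}{(Z,G)|Z}\le\mi[P]{Y}{(Z,G)|Z}$. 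Now $\mi[P]{Y}{(Z,G)|Z}=H(Y\mid Z)-H(Y\mid Z,G)=H(Y\mid Z)-H(Y\mid T)$, and the hypothesis $\mi{Y}{T}=\mi{Y}{Z}$ yields $H(Y\mid T)=H(Y\mid Z)$, so this conditional mutual information is $0$ and hence the minimum is $0$. Therefore $\red{Y}{T,Z}=\mi{Y}{T}-0=\mi{Y}{T}$, the maximum permitted value, so $S=Z$ is optimal irrespective of whether $H(Z)$ or $H(G)$ is larger --- the contrast with part~(i).

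The hard part is really just keeping the bookkeeping honest: confirming that $S=Z$ and $S=G$ are admissible student representations (they are, as coordinates of $T(X)$), and flagging that the theorem claims these are among the maximizers rather than unique maximizers. The one identity doing structural work is $\mi[P]{Y}{(Z,G)|Z}=H(Y\mid Z)-H(Y\mid T)$, which relies on $Z$ being an actual component of $T$ so that conditioning on $(Z,G)$ together with $Z$ equals conditioning on $T$; after that, the no-information-loss hypothesis forces the collapse to zero. It is worth noting that $\mi{Y}{G}=0$ is not needed in the proof itself --- only $\mi{Y}{T}=\mi{Y}{Z}$ --- though of course it is what makes ``nuisance'' the right word for $G$.
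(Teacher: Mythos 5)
Your proof is correct and follows essentially the same route as the paper's: for (i), bound $\mi{T}{S}\le H(S)\le\max\{H(Z),H(G)\}$ and exhibit the choice of $S$ attaining it, and for (ii), combine $\red{Y}{T,S}\le\mi{Y}{T}$ with the feasibility of $Q=P$ in $\Delta_P$ and the vanishing of $\mi[P]{Y}{T|Z}$ under the hypothesis $\mi{Y}{T}=\mi{Y}{Z}$. A minor bonus of your version of (i) is that computing $\mi{T}{Z}=H(Z)-H(Z|T)=H(Z)$ directly sidesteps the paper's intermediate step $H(Z,G)=H(Z)+H(G)$, which tacitly assumes $Z$ and $G$ are independent; your observation that $\mi{Y}{G}=0$ is never actually used in (ii) is also accurate.
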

The uninformative random variable $G$ here can be seen as a stronger version of the nuisance defined in \cite[Section 2.2]{achilleNuisance}. In the above scenario, the task-related part of the student loss will have a tension with the distillation loss when $H(Z) < H(G)$, in which case, the distillation actually adversely affects the student. In contrast, a loss that maximizes $\red{Y}{T,S}$ will always be aligned with the task-related loss.

These examples show that the frameworks based on maximizing $\mi{T}{S}$ are not capable of selectively distilling the task-related information to the student. In an extreme case, they are not robust to being distilled from a corrupted teacher network. This is demonstrated in the experiments under Section \ref{sec_experiments}. 

It may appear that using information gain (conditional mutual information) $\mi{Y}{T|S}=\mi{Y}{T,S}-\mi{Y}{S}$ as the measure for the amount of knowledge available to distill resolves the cases similar to Example 1. However, Example 3 below provides a counterexample.

\textbf{Example 3:} (Effect of synergy) Consider a scenario similar to Example 1, where the teacher is uninformative regarding the interested task. For example, let $U_1, U_2 \sim Ber(0.5)$ and $Y=U_1, T=U_1\oplus U_2$ where $\oplus$ denotes the binary XOR operation. Suppose we were to consider conditional mutual information $\mi{Y}{T|S}$ as the measure of the amount of knowledge available to distill information available in the teacher. Then, $\mi{Y}{T|S}=H(Y)$ when $S=U_2$, indicating non-zero knowledge available to distill in the teacher. This is un-intuitive since in this case both $\mi{Y}{T}=\mi{Y}{S}=0$ and neither the teacher nor the student can be used alone to predict $Y$. In contrast, the proposed measures give $\uni{Y}{T}{S}=\red{Y}{T,S}=0$ indicating no available or transferred knowledge. 

Next, we present Theorem \ref{thm_properties} which highlights some important properties of the proposed metrics. These properties indicate that the proposed measures agree well with our intuition for explaining distillation.

\begin{restatable}[Properties]{theorem}{thmProperties}
\label{thm_properties}
The following properties hold for knowledge to distill and already transferred knowledge defined as in Definition \ref{def_distillabledKnowledge} and Definition \ref{def_distilledKnowledge} respectively.
\begin{enumerate}[labelindent=0pt,label=(\roman*)]
    \item $\uni{Y}{T}{S}$ and $\red{Y}{T,S}$ are non-negative.

    \item When $\uni{Y}{T}{S}=0$, the teacher has zero knowledge available for distillation. At this point, the student has the maximum information that any one of the representations $T$ or $S$ has about $Y$; i.e.,
    $\max \{\mi{Y}{T}, \mi{Y}{S}\} = \mi{Y}{S}$.
    
    \item For a given student representation $S$ and any two teacher representations $T_1$ and $T_2$ if there exists a deterministic mapping $h$ such that $T_1=h(T_2)$, then $\uni{Y}{T_1}{S} \leq \uni{Y}{T_2}{S}$.
\end{enumerate}
\end{restatable}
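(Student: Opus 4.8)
The plan is to handle the three parts in increasing order of difficulty, using only Definition~\ref{def_brojaRedUni} (the BROJA formulation) and elementary information inequalities; finiteness of $\mathcal{Y},\mathcal{T},\mathcal{S}$ guarantees the minima are attained, since $\Delta_P$ is compact and $Q\mapsto\mi[Q]{Y}{T|S}$ is continuous. For part~(1), non-negativity of $\uni{Y}{T}{S}$ is immediate because it is a minimum of conditional mutual informations $\mi[Q]{Y}{T|S}\geq 0$. For $\red{Y}{T,S}=\mi{Y}{T}-\min_{Q\in\Delta_P}\mi[Q]{Y}{T|S}$, I would exhibit a single feasible $Q$ that already makes the conditional mutual information no larger than $\mi{Y}{T}$, namely the coupling $Q^\ast(y,t,s)=P(y)P(t\mid y)P(s\mid y)$ under which $T\perp S\mid Y$; it lies in $\Delta_P$ since its $(Y,T)$ and $(Y,S)$ marginals match those of $P$, and $\mi[Q^\ast]{Y}{T|S}=H_{Q^\ast}(T\mid S)-H_{Q^\ast}(T\mid Y,S)=H_{Q^\ast}(T\mid S)-H_{Q^\ast}(T\mid Y)\leq H_{Q^\ast}(T)-H_{Q^\ast}(T\mid Y)=\mi{Y}{T}$. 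Hence $\min_{Q\in\Delta_P}\mi[Q]{Y}{T|S}\leq\mi{Y}{T}$ and $\red{Y}{T,S}\geq0$. (These non-negativity facts are already in \citet{bertschingerUniqueInfo}; I include the short argument for self-containedness.)

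For part~(2), suppose $\uni{Y}{T}{S}=0$. Then a minimizer $Q^\ast\in\Delta_P$ satisfies $\mi[Q^\ast]{Y}{T|S}=0$, i.e. $Y\perp T\mid S$ under $Q^\ast$, so $\mi[Q^\ast]{Y}{T,S}=\mi[Q^\ast]{Y}{S}$. Because $Q^\ast$ fixes the pairwise marginals, $\mi[Q^\ast]{Y}{T,S}\geq\mi[Q^\ast]{Y}{T}=\mi{Y}{T}$ and $\mi[Q^\ast]{Y}{S}=\mi{Y}{S}$, giving $\mi{Y}{S}\geq\mi{Y}{T}$ and therefore $\max\{\mi{Y}{T},\mi{Y}{S}\}=\mi{Y}{S}$. (Alternatively one may just combine the PID identities $\mi{Y}{T}=\uni{Y}{T}{S}+\red{Y}{T,S}$ and $\mi{Y}{S}=\uni{Y}{S}{T}+\red{Y}{T,S}$ with part~(1).)

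Part~(3) requires real care. Write $P_1,P_2$ for the joint laws of $(Y,T_1,S)$ and $(Y,T_2,S)$, and let $Q_2^\ast\in\Delta_{P_2}$ attain $\uni{Y}{T_2}{S}=\mi[Q_2^\ast]{Y}{T_2|S}$. Define $Q_1$ as the law of $(Y,h(T_2),S)$ when $(Y,T_2,S)\sim Q_2^\ast$, i.e. the pushforward of $Q_2^\ast$ under $(y,t_2,s)\mapsto(y,h(t_2),s)$. The crux is that $Q_1\in\Delta_{P_1}$: the map leaves $Y,S$ untouched, so the $(Y,S)$ marginal of $Q_1$ equals that of $Q_2^\ast$, hence of $P_2$, hence of $P_1$; and the $(Y,T_1)$ marginal of $Q_1$ is the pushforward by $h$ of the $(Y,T_2)$ marginal of $Q_2^\ast$, which is the true $(Y,T_2)$ marginal, whose pushforward by $h$ is exactly the true $(Y,T_1)$ marginal since $T_1=h(T_2)$ holds under $P$. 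Given feasibility, the conditional data-processing inequality (valid because $T_1=h(T_2)$ deterministically under $Q_1$) yields $\mi[Q_1]{Y}{T_1|S}=\mi[Q_2^\ast]{Y}{h(T_2)|S}\leq\mi[Q_2^\ast]{Y}{T_2|S}$, so $\uni{Y}{T_1}{S}\leq\mi[Q_1]{Y}{T_1|S}\leq\mi[Q_2^\ast]{Y}{T_2|S}=\uni{Y}{T_2}{S}$.

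The main obstacle is exactly this feasibility step: one must track how the marginal constraints defining $\Delta_{P_1}$ versus $\Delta_{P_2}$ interact with the deterministic reduction $h$, using that both constraint sets pin the relevant marginals to the \emph{true} ones so that the relation $T_1=h(T_2)$ transfers from $P$ to $Q_1$. The conditional data-processing bound and the attainment of the minima are routine by comparison.
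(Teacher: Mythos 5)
Your proposal is correct, and all three parts check out. The difference from the paper is one of self-containedness rather than of substance for part~(2), but genuinely one of route for parts~(1) and~(3): the paper proves only part~(2), and does so exactly via your parenthetical alternative (the PID identities $\mi{Y}{T}=\uni{Y}{T}{S}+\red{Y}{T,S}$ and $\mi{Y}{S}=\uni{Y}{S}{T}+\red{Y}{T,S}$ combined with non-negativity), while it disposes of parts~(1) and~(3) by citation to \citet[Lemma 5]{bertschingerUniqueInfo} and to Lemma 31 of the unique-information/deficiencies paper. Your primary argument for part~(2) — extracting a minimizer $Q^\ast$ with $Y\perp T\mid S$, then using $\mi[Q^\ast]{Y}{S}=\mi[Q^\ast]{Y}{T,S}\geq\mi[Q^\ast]{Y}{T}=\mi{Y}{T}$ — is an equally valid, slightly more hands-on variant. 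For part~(1), exhibiting the conditional-independence coupling $Q^\ast(y,t,s)=P(y)P(t\mid y)P(s\mid y)$ as a feasible point with $\mi[Q^\ast]{Y}{T|S}\leq\mi{Y}{T}$ is exactly the standard witness and is correct. For part~(3), your pushforward construction of $Q_1$ from the optimal $Q_2^\ast$ under $(y,t_2,s)\mapsto(y,h(t_2),s)$, the verification that both pairwise marginals of $Q_1$ agree with those of $P_1$ (using that $T_1=h(T_2)$ holds under the true joint), and the conditional data-processing step $\mi[Q_1]{Y}{T_1|S}\leq\mi[Q_2^\ast]{Y}{T_2|S}$ together reconstruct the content of the cited monotonicity lemma; you correctly identify the feasibility of $Q_1$ as the only non-routine step. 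What your version buys is a fully self-contained appendix that does not lean on external lemmas; what the paper's version buys is brevity. No gaps.
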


\section{A Framework For Maximizing Transferred Knowledge}
\label{sec_REDFrameworks}

\begin{figure*}[t]   
    \begin{subfigure}[b]{0.49\textwidth}
        \centering
        \includegraphics[width=\textwidth]{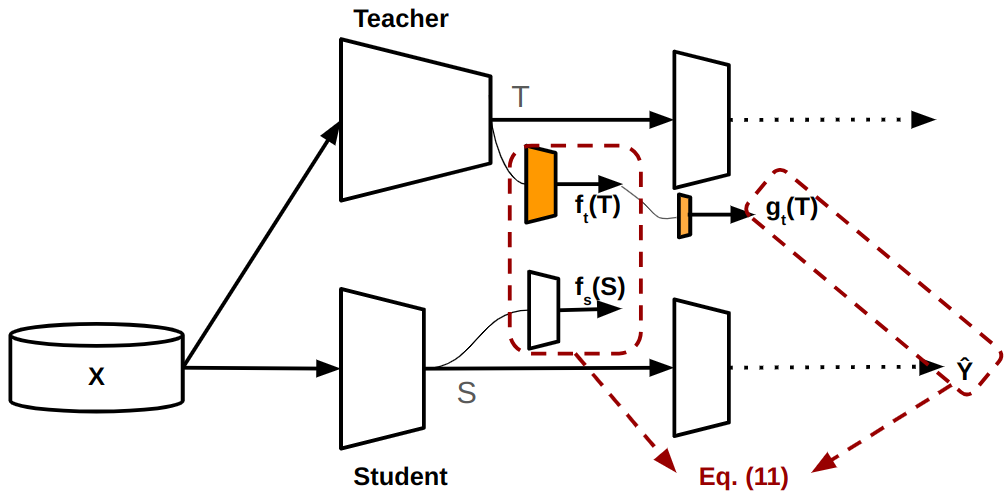}
        \caption{Phase 1: Train the teacher's filters while the student is kept frozen}
    \end{subfigure}
    \hspace{0.02\textwidth}
    \begin{subfigure}[b]{0.49\textwidth}
        \centering
        \includegraphics[width=\textwidth]{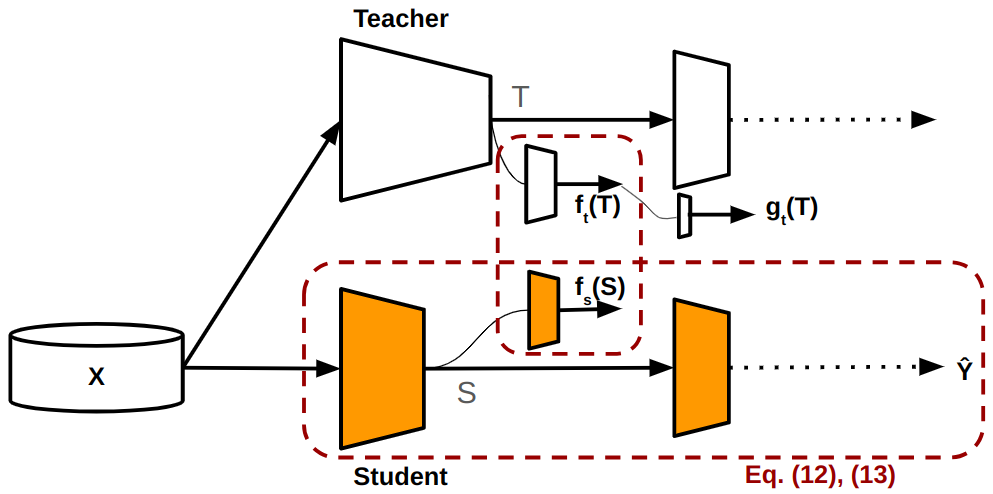}
        \caption{Phase 2: Train the student along with student's filters while the teacher is kept frozen}
    \end{subfigure}%
    \caption{Redundant Information Distillation (RID) framework: $f_t(\cdot)$ and $f_s(\cdot)$ are the teacher and the student filter outputs, respectively. A classification head $g_t(\cdot)$ is appended to the teacher's filter during the first phase. Highlighted in amber are the components that are being updated in each phase.
    \label{fig_RIDStructure}}
\end{figure*}

In this section, we propose a distillation framework -- Redundant Information Distillation (RID) -- which maximizes the transferred knowledge $\red{Y}{T,S}$ with a focus on classification problems. We first show that our measure of transferred knowledge is lower-bounded by an alternative definition of redundant information (also called the intersection information -- denoted by $Red_\cap (Y: T,S)$). Next, we leverage this lower-bound to develop a multi-level optimization framework to selectively distill task-relevant knowledge.

\begin{definition}[$I_\alpha$ measure \cite{griffithRedInfo}]
\label{def_griffithRed}
\begin{equation}
    Red_\cap (Y: T,S) = \max_{P(Q|Y)} \mi{Y}{Q} \quad \text{such that} \quad \mi{Y}{Q|f_t(T)}=\mi{Y}{Q|f_s(S)}=0.
\end{equation}
\end{definition}

Next we show, from Theorem \ref{thm_griffithBrojaRed}, that $Red_\cap (Y: T,S)$ is a lower-bound for $\red{Y}{T,S}$. The significance of this result is that maximizing the lower-bound $Red_\cap (Y: T,S)$ would also increase $\red{Y}{T,S}$.
\begin{restatable}[Transferred knowledge lower bound]{theorem}{thmGriffithBrojaRed}
    \label{thm_griffithBrojaRed}
    For any three random variables $Y, T$ and $S$,
    \begin{equation}
        Red_\cap (Y: T,S) \leq \red{Y}{T,S}
    \end{equation}
    where $Red_\cap (Y: T,S)$ is defined as per Definition \ref{def_griffithRed} and $\red{Y}{T,S}$ is defined in  Definition \ref{def_brojaRedUni}.
\end{restatable}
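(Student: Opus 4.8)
\emph{Proof strategy.} The plan is to produce, for the optimal $Q$ in Definition~\ref{def_griffithRed}, a single coupling $\hat{Q}$ lying in $\Delta_P$ whose co-information $\mathrm{CoI}_{\hat{Q}}(Y;T;S):=\mi[\hat{Q}]{Y}{T}-\mi[\hat{Q}]{Y}{T\mid S}$ is at least $\mi[\hat{Q}]{Y}{Q}=Red_\cap(Y{:}T,S)$. Since $\hat{Q}\in\Delta_P$ we have $\mi[\hat{Q}]{Y}{T\mid S}\ge\min_{Q'\in\Delta_P}\mi[Q']{Y}{T\mid S}$, and since every $Q'\in\Delta_P$ fixes the $(Y,T)$ marginal, $\mi[\hat{Q}]{Y}{T}=\mi{Y}{T}$; hence Definition~\ref{def_brojaRedUni} gives $\red{Y}{T,S}=\mi{Y}{T}-\min_{Q'\in\Delta_P}\mi[Q']{Y}{T\mid S}\ge\mi[\hat{Q}]{Y}{T}-\mi[\hat{Q}]{Y}{T\mid S}=\mathrm{CoI}_{\hat{Q}}(Y;T;S)$. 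So the whole proof reduces to constructing such a $\hat{Q}$.

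The first step is to rewrite the constraint in Definition~\ref{def_griffithRed}. Because $Q$ is generated from $Y$, the chain $Q-Y-(T,S)$ is Markov; combining this with $\mi{Y}{Q\mid f_t(T)}=0$ via the one-line inequality $\mi{Q}{Y}=\mi{Q}{f_t(T)}\le\mi{Q}{T}\le\mi{Q}{Y}$ forces $\mi{Y}{Q\mid T}=0$, and symmetrically $\mi{Y}{Q\mid S}=0$; thus one may take $f_t,f_s$ to be the identity, and the feasible $Q$ are exactly those generated from $Y$ for which both $Q-T-Y$ and $Q-S-Y$ are Markov chains. A useful consequence of this ``double-Markov'' property is the pointwise identity $P(q\mid y)=P(q\mid t)=P(q\mid s)$ on the supports of $P(Y,T)$ and $P(Y,S)$.

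The crux is the coupling. Fixing an optimal $Q$, I would take the conditionally-independent coupling $\hat{Q}(y,t,s):=P(y)P(t\mid y)P(s\mid y)$ and extend it by $P(q\mid y)$. Then $\hat{Q}\in\Delta_P$, because both marginals $(Y,T)$ and $(Y,S)$ are unchanged; the $(Y,Q)$ marginal is still $P(y)P(q\mid y)$, so $\mi[\hat{Q}]{Y}{Q}=Red_\cap(Y{:}T,S)$; by construction $T\perp S\mid(Y,Q)$ under $\hat{Q}$; and, using the identity $P(q\mid y)=P(q\mid t)=P(q\mid s)$ from the previous step, one checks that $Q-T-Y$ and $Q-S-Y$ remain Markov under $\hat{Q}$. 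I expect the verification of these four facts — especially that the double-Markov structure is inherited by $\hat{Q}$ — to be the only nontrivial part; everything after is chain-rule bookkeeping.

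Finally, evaluating all mutual informations under $\hat{Q}$: expanding $I(Y;S,Q\mid T)$ two ways and using $I(Y;Q\mid T)=0$ yields $I(Y;S\mid T)\le I(Y;S\mid T,Q)$, while expanding $I(Y,T;S\mid Q)$ two ways and using $I(T;S\mid Y,Q)=0$ yields $I(Y;S\mid T,Q)\le I(Y;S\mid Q)$. Therefore, by symmetry of co-information,
\begin{equation}
\mathrm{CoI}_{\hat{Q}}(Y;T;S)=I(Y;S)-I(Y;S\mid T)\ \ge\ I(Y;S)-I(Y;S\mid Q)=I(Y;Q)-I(Y;Q\mid S)=I(Y;Q),
\end{equation}
the last equality because $Q-S-Y$ is Markov under $\hat{Q}$. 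Chaining this with the first paragraph gives $\red{Y}{T,S}\ge\mathrm{CoI}_{\hat{Q}}(Y;T;S)\ge\mi[\hat{Q}]{Y}{Q}=Red_\cap(Y{:}T,S)$, which is the claim. The only real obstacle is finding the coupling $\hat{Q}$ and checking it inherits the Markov/conditional-independence structure of the Griffith-optimal $Q$; the reduction eliminating $f_t,f_s$ is a minor preliminary.
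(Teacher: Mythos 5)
Your proof is correct and takes a genuinely different route from the paper's. The paper starts from the optimizer written as $Q=f_t^*(T)$ with $\mi{Y}{f_t^*(T)|f_s^*(S)}=0$, observes that this constraint forces $\red{Y}{f_t^*(T),f_s^*(S)}=\mi{Y}{f_t^*(T)}$ (the minimum in Definition \ref{def_brojaRedUni} is attained at $P$ with value zero), and then upgrades the sources via a monotonicity argument: the unique information can only decrease when variables are added to the conditioning side (Lemma \ref{lem_brojaUniMonotonicity}), combined with $\mi{Y}{T|g(S),S}=\mi{Y}{T|S}$ (Lemma \ref{lem_conditionalMIFunctions}), giving $\red{Y}{f_t^*(T),f_s^*(S)}\le\red{Y}{f_t^*(T),S}\le\red{Y}{T,S}$. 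You instead exploit the variational structure of Definition \ref{def_brojaRedUni} directly: $\red{Y}{T,S}$ dominates the co-information of \emph{any} coupling in $\Delta_P$, and you exhibit the explicit coupling $\hat{Q}(y,t,s,q)=P(y)P(t|y)P(s|y)P(q|y)$, check that it lies in $\Delta_P$, preserves $\mi{Y}{Q}$, satisfies $T\perp S\mid (Y,Q)$, and inherits the double-Markov structure $Q-T-Y$, $Q-S-Y$ through the pointwise identity $P(q|y)=P(q|t)=P(q|s)$; the chain-rule bookkeeping then gives $\mathrm{CoI}_{\hat{Q}}(Y;T;S)\ge \mi{Y}{Q}$. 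I verified each of these steps; they hold. Your route is self-contained (no appeal to the monotonicity/deficiency lemmas of Bertschinger et al.) and in effect re-derives the fact that the BROJA redundancy is the maximal co-information over $\Delta_P$, which is a nice structural insight the paper's proof does not make explicit. One caveat worth flagging: your preliminary reduction, which eliminates $f_t,f_s$ and deduces $\mi{Y}{Q|T}=\mi{Y}{Q|S}=0$ together with the pointwise identity, relies on the Markov chain $Q-Y-(T,S)$, i.e., on reading the "$\max_{P(Q|Y)}$" in Definition \ref{def_griffithRed} literally as $Q$ being drawn from a channel acting on $Y$ alone. The paper's own proof (and the RID instantiation) instead takes the optimizer to be $Q=f_t^*(T)$, a deterministic function of $T$, for which that Markov chain generally fails and $\mi{Y}{Q|f_s(S)}=0$ no longer implies $\mi{Y}{Q|S}=0$; under that reading your first step breaks, whereas the monotonicity argument still applies. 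So your proof establishes the theorem exactly as stated via Definition \ref{def_griffithRed}, while the paper's proof is the one that covers the $Q=f_t(T)$ usage inside the RID framework; if you want your argument to subsume that case as well, you would need to replace the reduction step (for instance, handling $Q$ that is a function of $T$ by an argument closer to the paper's source-processing monotonicity).
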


Next, we discuss our proposed approach for maximizing the lower-bound $Red_\cap (Y: T,S)$ (also see Figure~\ref{fig_RIDStructure} and Algorithm~\ref{algo_rid} for our complete strategy). Our proposed framework is based on selecting $Q$ in Definition \ref{def_griffithRed} to be $Q=f_t(T)$, and parameterizing $f_t(\cdot)$ and $f_s(\cdot)$ using small neural networks. To denote the parameterization, we will occasionally use the elaborated notation $f_t(\cdot;\theta_t)$ and $f_s(\cdot;\theta_s)$, where $\theta_t$ and $\theta_s$ denote the parameters of $f_t$ and $f_s$, respectively. With the substitution of $Q=f_t(T)$, Definition \ref{def_griffithRed} results in the optimization problem given below:
\begin{equation}
    \label{eq_basicRedOpt}
    \max_{\theta_t, \theta_s, \eta_s} \mi{Y}{f_t(T;\theta_t)} \quad \text{such that} \quad \mi{Y}{f_t(T;\theta_t)|f_s(S_{\eta_s};\theta_s)}=0. \tag{P1}
\end{equation}

We divide the problem \eqref{eq_basicRedOpt} into two phases and employ gradient descent on two carefully designed loss functions to perform the optimization. In the first phase, we maximize the objective w.r.t. $\theta_t$ while $\theta_s$ and $S$ are kept constant (recall that $T$ is fixed in all cases because the teacher is not being trained during the process). For this, we append an additional classification head $g_t(\cdot;\phi_t)$ parameterized by $\phi_t$ to the teacher's task aware filter $f_t$. We then minimize the following loss function with respect to $\theta_t$ and $\phi_t$:
\begin{equation}
\label{eq_ridTeacherLoss}
    \mathcal{L}_t(\theta_t, \phi_t) = \mathcal{L}_{CE}(Y, g_t(f_t(T;\theta_t);\phi_t)) + \sum_{c=1}^C\sum_{h=1}^H\sum_{w=1}^W \expt[P_{X}]{ \frac{V_{c,h,w}^2}{\sigma_c}}
\end{equation}
where $V_{c,h,w}$ denotes the corresponding element of $V=f_t(T(X);\theta_t) - f_s(S(X);\theta_s) \in\mathbb{R}^{C\times H\times W}$. Here, $C, H,$ and $W$ are the number of channels, height, and width of the outputs of $f_s$ and $f_t$. $\sigma=[\sigma_1,\dots,\sigma_C]^T$ is a stand-alone vector of weights that are optimized in the second phase. Minimizing the cross-entropy term $\mathcal{L}_{CE}(Y, g_t(f_t(T;\theta_t);\phi_t))$ of $\mathcal{L}_t(\theta_t, \phi_t)$ above amounts to maximizing $\mi{Y}{f_t(T;\theta_t)}$. The second term prohibits $f_t(T)$ from diverting too far from $f_s(S)$ during the process, so that the constraint $\mi{Y}{f_t(T;\theta_t)|f_s(S;\theta_s)}=0$ can be ensured.

During the second phase, we freeze $\theta_t$ and maximize the objective over $\theta_s, S_{\eta_s}$ and $\sigma$. The loss function employed in this phase is as follows:
\begin{equation}
    \mathcal{L}(\theta_s, \sigma, \eta_s) = \lambda_1 \mathcal{L}_{CE}(Y, \hat{Y}_{\eta_s}) + \lambda_2 \mathcal{L}_s(\theta_s, \sigma, \eta_s)
\end{equation}
where
\begin{equation}
\label{eq_ridStudentLoss}
\mathcal{L}_s(\theta_s, \sigma, \eta_s) = \Bigg( ||\sigma||^2 + \sum_{c=1}^C\sum_{h=1}^H\sum_{w=1}^W \expt[P_{X}]{ \frac{V_{c,h,w}^2}{\sigma_c}} \Bigg)
\end{equation}
and $\lambda_1$ and $\lambda_2$ are scalar hyperparameters which determine the prominence of ordinary learning (through $\mathcal{L}_{CE}(Y, \hat{Y}_{\eta_s})$) and distillation (through $\mathcal{L}_s(\theta_s, \sigma, \eta_s)$). $V$ and $\sigma$ are as defined earlier. $\hat{Y}_{\eta_s}$ denotes the final prediction of the student network. 

The first term of the loss function is the ordinary task-related loss. The next two terms correspond to the distillation loss, which is our focus in the following explanation. Consider phase 2 as an estimation problem that minimizes the $\sigma$-weighted mean squared error, where $Q=f_t(T)$ is the estimand and $f_s(\cdot)$ is the estimator. The magnitudes of the positive weights $\sigma$ are controlled using the term $||\sigma||^2$. We observe that this optimization ensures $\mi{Y}{Q|f_s(S)}=0$ given that the following assumption holds.

\textbf{Assumption:} {\it Let the estimation error be $\epsilon=f_t(T)-f_s(S)$. Assume $\mi{\epsilon}{Y|f_s(S)}=0$, i.e., given the estimate, the estimation error is independent of $Y$.}


With the above assumption, we see that
\begin{equation}
    \mi{Y}{Q|f_s(S)} = \mi{Y}{f_s(S)+\epsilon|f_s(S)} = \mi{Y}{\epsilon|f_s(S)}= 0.
\end{equation}
Therefore, the constraint in problem \ref{eq_basicRedOpt} is satisfied by this selection of random variables. Therefore, along with the maximization of $\mi{Y}{Q}$ during phase 1, the proposed framework can be seen as performing the optimization in Definition \ref{def_griffithRed} in two steps.

This, along with Theorem \ref{thm_griffithBrojaRed}, completes our claim that the proposed framework maximizes a lower bound for the transferred knowledge. RID loss terms  can be extended to multiple layers as follows: Replace the single term $\mathcal{L}_{t}(\theta_t, \phi_t)$ in \eqref{eq_ridTeacherLoss} with a sum of terms $\sum_{k=1}^K \mathcal{L}_t^{(k)}(\theta_t^{(k)}, \phi_t^{(k)})$ corresponding to each pair of layers $T^{(k)}$ and $S_{\eta_s}^{(k)}$ where$k=1,\dots,K$. Similarly, replace $\mathcal{L}_s(\theta_s,\sigma,\eta_s)$ in \eqref{eq_ridStudentLoss} with $\sum_{k=1}^K \mathcal{L}_s^{(k)}(\theta_s^{(k)}, \sigma^{(k)}, \eta_s)$. The framework is summarized in Algorithm \ref{algo_rid}. The advantage of RID over the VID framework \cite{variationalKD} which maximizes $\mi{T}{S}$ can be observed in the experiments that are detailed in Section \ref{sec_experiments}.
\begin{remark}
We note that the Large Language Model (LLM) fine-tuning framework of Task-aware Layer-wise Distillation (TED) \cite{taskAware} shares intuitive similarities with RID regarding distilling task-related knowledge. However, they take a heuristic approach when designing the framework. In fact, our mathematical formulation can explain the success of TED as detailed in Appendix \ref{appdx_vidAndTed}. In addition to the application domain, the difference between TED and RID can mainly be attributed to the following: 
\begin{enumerate}[label=(\roman*), wide, topsep=0pt, itemsep=0pt]
    \item During the first stage, TED trains both $f_t(\cdot)$ and $f_s(\cdot)$ whereas RID only trains $f_t(\cdot)$.
    \item In the second stage loss, TED includes an ordinary mean squared error term whereas RID includes a weighted (using $\sigma$) mean squared error term.
\end{enumerate}
To the best of our knowledge, our work is the first to information-theoretically quantify the actual task-relevant transferred knowledge and formally incorporate it into an optimization.
\end{remark}

\RestyleAlgo{ruled}
\SetKwComment{Comment}{/*}{*/}
\begin{algorithm}[th]
\caption{Redundant Information Distillation}\label{alg:two}
\KwData{A dataset of samples of (X,Y), teacher model with intermediate representations $T^{(1)},\dots,T^{(k)}$, hyperparameters $\lambda_1, \lambda_2 > 0$, \# warm-up epochs $n_w$, \# training epochs $n$, \# steps per cycle $q\leq n$, alternating ratio $r (0<r<1)$} 

\KwResult{Trained student network parameterized with $\eta_s$}

Initialize parameters $\theta^{(k)}_t, \theta^{(k)}_s, \phi^{(k)}_t$ and $\eta_s$\;

\For{$i \in \{1,\dots,n_w\}$}{
    $\displaystyle\minimize_{\{\theta_t^{(k)},\phi_t^{(k)}\}} \sum_{k=1}^K \mathcal{L}_{CE}(Y, g_t^{(k)}(f_t^{(k)}(T^{(k)}; \theta_t^{(k)}); \phi_t^{(k)}))$ \;
}

\For{$i \in \{1,\dots,n\}$}{
    \eIf{$\texttt{round}(i / q) < q\times r$}{
        $\displaystyle\minimize_{\{\theta_t^{(k)}, \phi_t^{(k)}\}} \sum_{k=1}^K  \mathcal{L}_t(\theta_t^{(k)}, \phi_t^{(k)})$ \Comment*[r]{See Eq. \eqref{eq_ridTeacherLoss}}
    }
    {
       $\displaystyle\minimize_{\{\theta_s^{(k)}, \sigma^{(k)}, \eta_s\}} \lambda_1 \mathcal{L}_{CE}(Y, \hat{Y}_{\eta_s}) + \lambda_2 \sum_{k=1}^K \mathcal{L}_s(\theta_s^{(k)}, \sigma^{(k)}, \eta_s)$ \Comment*[r]{See Eq. \eqref{eq_ridStudentLoss}}
    }
}
\label{algo_rid}
\end{algorithm}






\section{Empirical Validation}
\label{sec_experiments}

\textbf{Setup:} We compare the performance of the proposed RID framework with (i) VID; and (ii) our adaptation of TED in this domain under two different conditions. In the first setting, the teacher network is fully trained with the complete training set, whereas in the second setting, the teacher is just randomly initialized without any training at all. Experiments are carried out on CIFAR-10  \cite{cifar10Dataset}, CIFAR-100 \cite{cifar100Dataset}, ImageNet \cite{imageNetDataset} and CUB-200-2011 \cite{cub200Dataset} datasets (with the last two being used in a transfer learning setup). Additionally, we train a student without any knowledge distillation, which we label as ``BAS". Table \ref{tab_modelArchitectures} provides details of the model architectures used in each setting.

\begin{table}[h!]
    \centering
    \caption{Model architectures}
    \begin{tabular}{p{5cm}cc}
        \toprule 
         \textbf{Dataset} & \textbf{Teacher} & \textbf{Student} \\
         \midrule
         CIFAR-10 & WRN-(40,2) & WRN-(16,1) \\[5pt]
         CIFAR-100 & WRN-(28,10) & WRN-(16,8) \\[5pt]
         ImageNet to CUB-200-2011 & ResNet-34 & ResNet-18 \\
         \bottomrule
    \end{tabular}
    \label{tab_modelArchitectures}
\end{table}






We distill three layers from the teacher to the corresponding student layers. In the case of RID, each teacher layer $T^{(k)}$ has its own filter $f_t^{(k)}$ parameterized with $\theta_t^{(k)}$. Student filters are parameterized in a similar manner. Moreover, each teacher filter $f_t^{(k)}(\cdot)$ has its own classification head $g^{(k)}_t(\cdot)$ parameterized with $\phi^{(k)}$. All the student representations are parameterized by the complete weight vector $\eta_s$. In the beginning, the teacher filters are trained for $n_w$ number of warm-up epochs with just the cross-entropy loss $\sum_{k=1}^K \mathcal{L}_{CE}(Y, g_t^{(k)}(f_t^{(k)}(T^{(k)}; \theta_t^{(k)}); \phi_t^{(k)}))$. Then, the optimization alternates between the first and the second phases, with each cycle taking $q$ epochs in total. Within a cycle, phase 1 is carried out for $r\times q$ epochs followed by phase 2 for rest of the epochs (see Algorithm \ref{algo_rid}). Values of all the hyperparameters are given in Appendix \ref{appdx_experiments}. TED is implemented similarly, except the fact that now, the student filters also have a classification head $g_s^{(k)}(\cdot)$ (see Appendix \ref{appdx_vidAndTed}). Since TED acts as a fine-tuning method, we use the BAS student as the base model to apply TED on. 

In order to evaluate the validity of the Definitions \ref{def_distillabledKnowledge} and \ref{def_distilledKnowledge}, we compute the PID components \cite{bertschingerUniqueInfo} of the joint mutual information $\mi{Y}{T,S}$ of the innermost distilled layer using the estimation method given in \cite{liangPIDCompute}. See Appendix \ref{appdx_experiments} for additional details.

\textbf{Results:} Figure \ref{fig_classificationAcc} shows the classification accuracy on CIFAR-10 dataset for each student model RID, VID, and TED when distilled with either a trained teacher or an untrained one. It also shows the classification accuracies of the baseline model BAS (without any distillation) and the trained teacher. Figures \ref{fig_pidPlots} and \ref{fig_pidPlotsCifar100} present the PID values and the marginal mutual information estimates for the teacher and the student models for CIFAR-10 and CIFAR-100, respectively. Tables \ref{tab_cifar100Results} and \ref{tab_transferLearningResults} in Appendix \ref{appdx_experiments} present the classification accuracies for CIFAR-100 and the transfer learning setup ImageNet $\rightarrow$ CUB-200-2011, respectively. 

\begin{figure}[!ht]
    \centering
    \includegraphics[width=0.64\columnwidth]{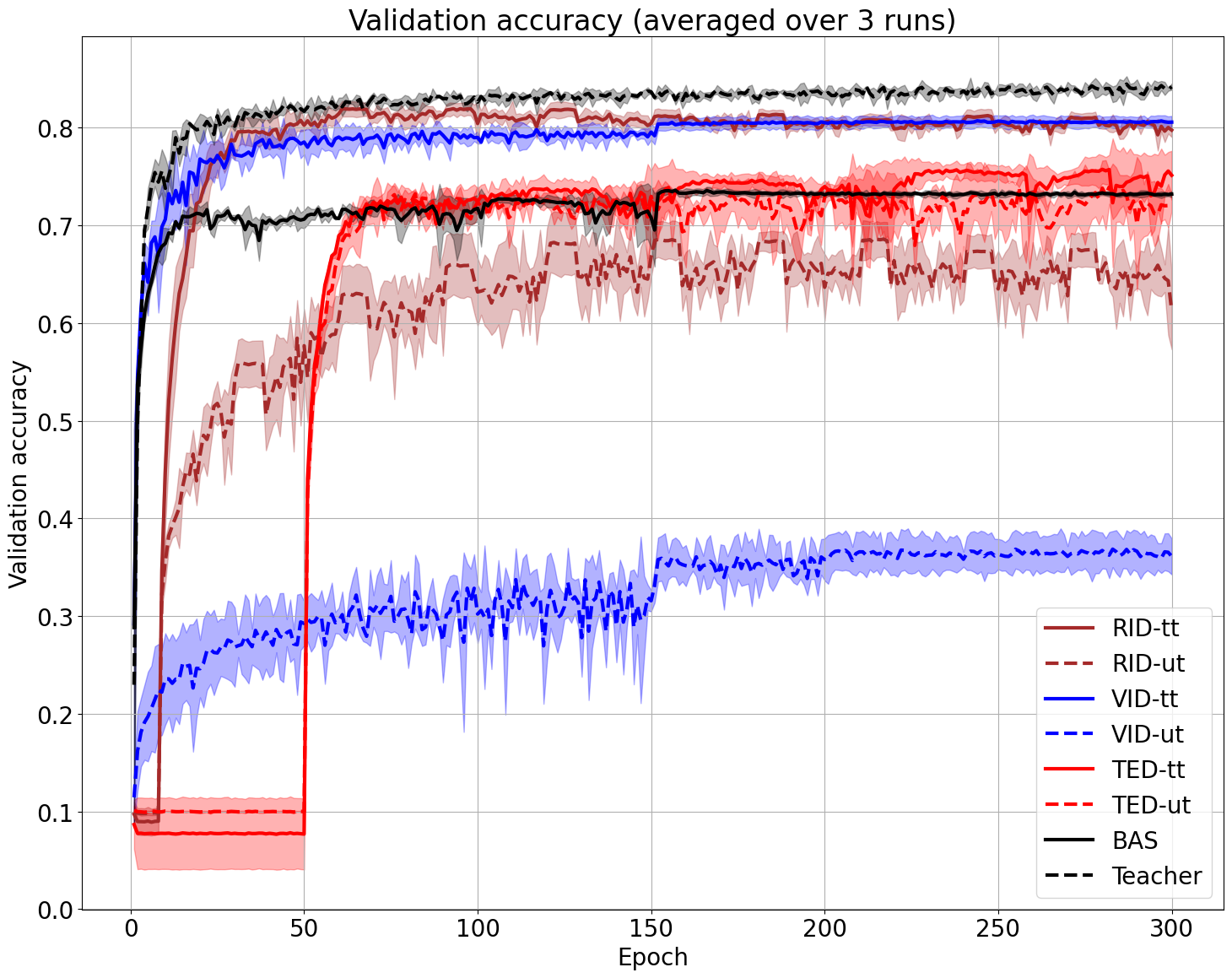}
    \caption{Classification accuracy for CIFAR-10 dataset of RID, VID, TED and BAS when distilled using a trained (abbreviated ``tt''--solid lines) and an untrained (abbreviated ``ut''--dashed lines) teacher: The solid and dashed lines indicate the mean over three runs. Shaded areas represent the corresponding confidence regions: mean $\pm$ standard deviation. Colors correspond to the distillation method used.}
    \label{fig_classificationAcc}
\end{figure}
\begin{figure*}[!ht] 
    \begin{subfigure}[b]{\textwidth}
    \centering   
    \includegraphics[width=0.8\textwidth]{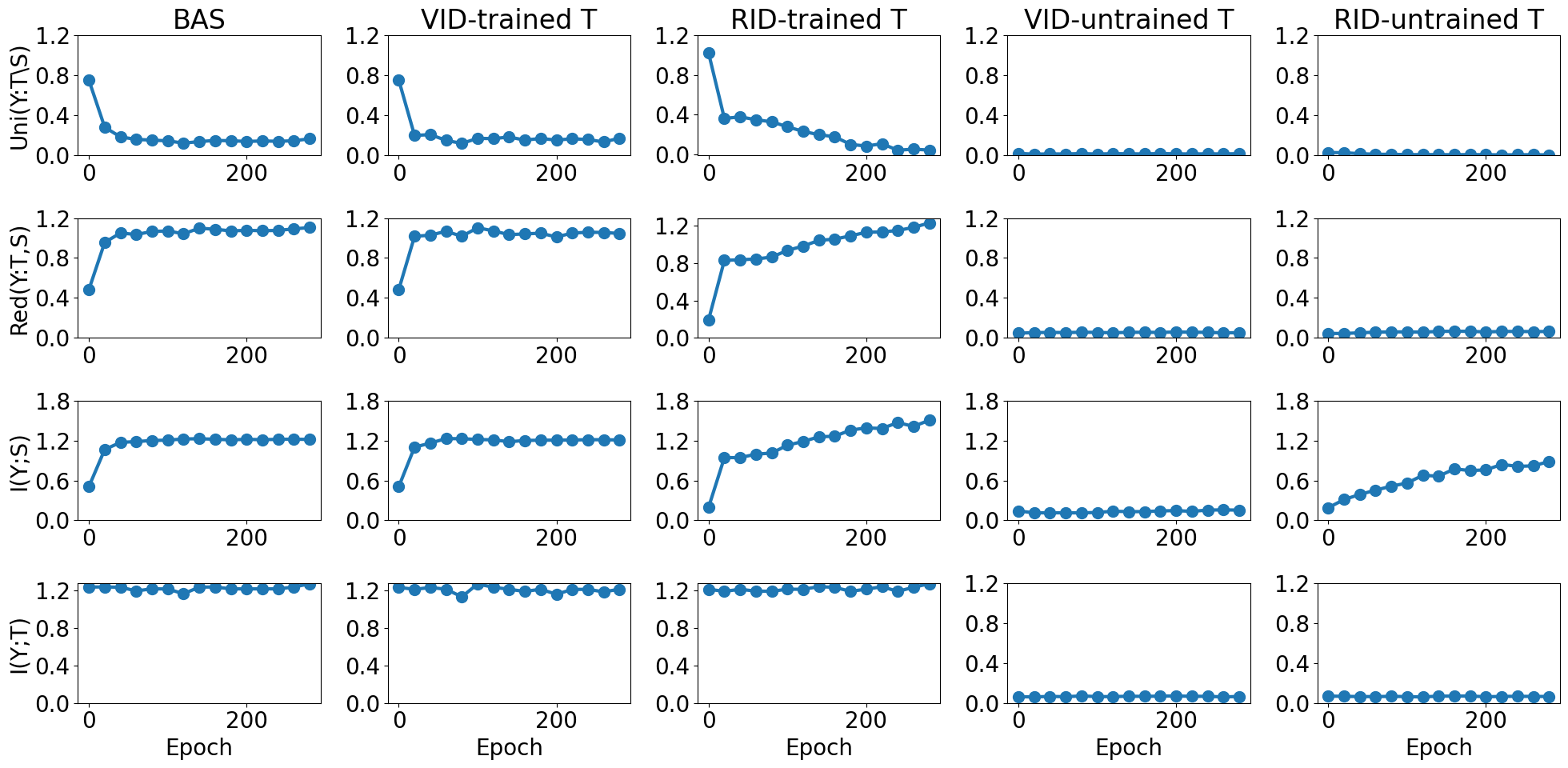}
    \caption{CIFAR-10}
    \label{fig_pidPlots}
    \end{subfigure}
    \begin{subfigure}[b]{\textwidth}
    \centering    
    \includegraphics[width=0.8\textwidth]{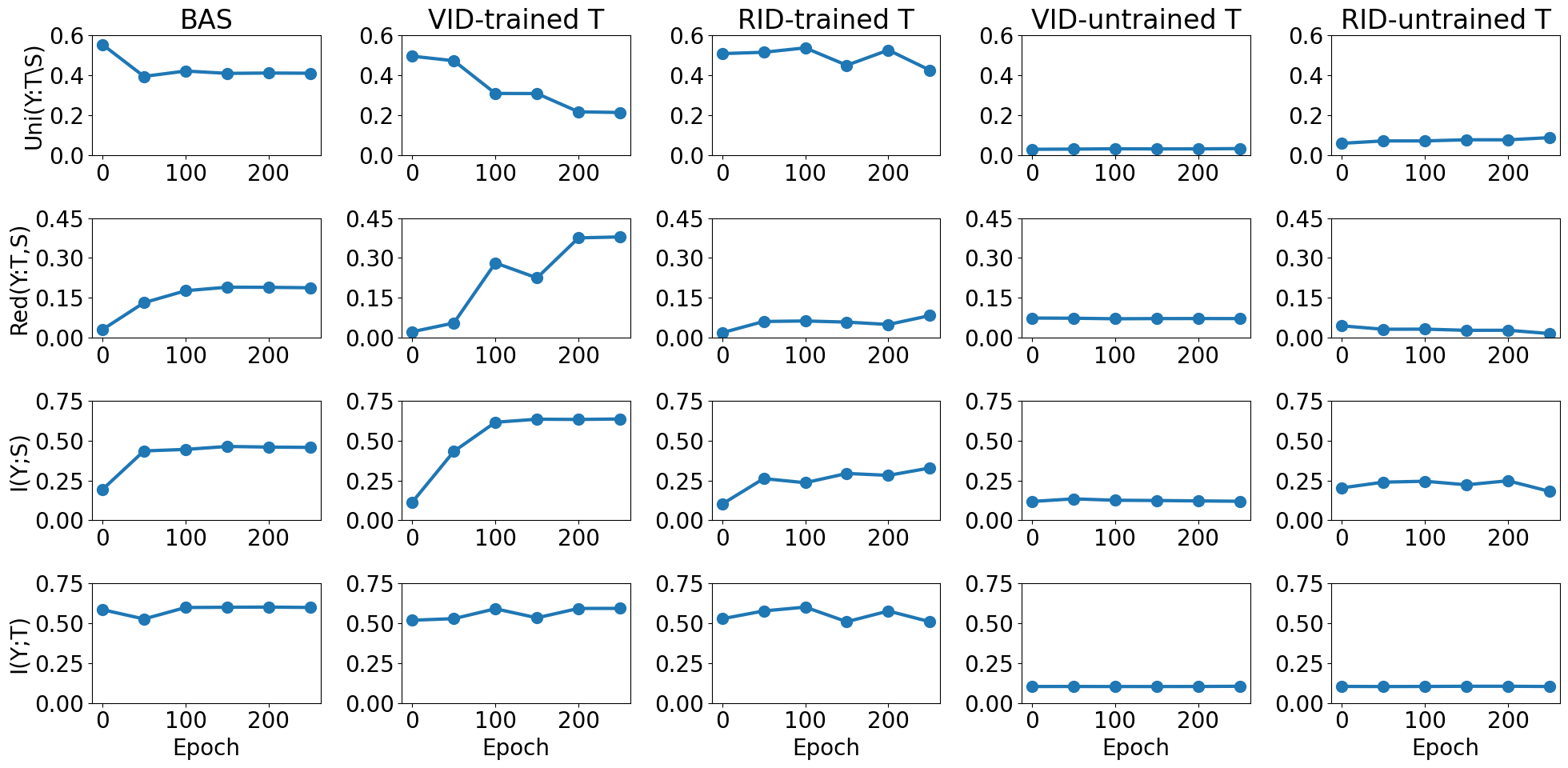}
    \caption{CIFAR-100}
    \label{fig_pidPlotsCifar100}
    \end{subfigure}
    \caption{Information atoms of $\mi{Y}{T,S}$ for BAS, VID and RID when distilled using a trained and an untrained teacher: Values are shown for the innermost distilled layer. The first two rows show that when distilled from a trained teacher, the remaining amount of knowledge available in the teacher for distillation $\uni{Y}{T}{S}$ decreases, whereas the already transferred knowledge $\red{Y}{T,S}$ increases. Observe from the third row how VID performs worse than both RID and BAS when the teacher is not trained.}
    \label{fig_pidPlotsCombined}
\end{figure*}

\textbf{Discussion:} From Figure \ref{fig_classificationAcc}, we see that both RID and VID perform equally well when the teacher is trained. However, the performance degradation of VID under the untrained teacher is significantly larger w.r.t. that of the RID. This can be attributed to the fact that the VID student trying to naively mimic the teacher. RID leads to more resilient and effective distillation under nuisance teachers as it succinctly quantifies task-relevant knowledge rather than simply aligning student and teacher representations. TED, despite being rather unstable, performs very close to the baseline model in both cases. As evident from Figure \ref{fig_pidPlotsCombined}, in the case of the trained teacher (whose $\mi{Y}{T}>0$), we observe that the knowledge available for distillation $\uni{Y}{T}{S}$ in the teacher decreases with the increasing number of epochs. Consequently, the amount of knowledge transferred $\red{Y}{T}{S}$ increases. When the teacher is not trained (i.e., $\mi{Y}{T}=0$), $\uni{Y}{T}{S},\red{Y}{T,S}\approx0$ as expected. Both BAS and RID models show an increase in $\mi{Y}{S}$ even when distilled from the untrained teacher. However, in this case, VID shows a very low $\mi{Y}{S}$ as expected, caused by the distillation loss forcing to mimic the teacher. A Python implementation of the experiments is available at \url{https://github.com/pasandissanayake/kd-rid}.

\section{Conclusion}

With the growing interest in knowledge distillation, our work provides critical insights into the explainability of knowledge distillation. We propose using $\uni{Y}{T}{S}$ to quantify the knowledge available in a teacher model for distillation w.r.t. a student and a downstream task. This, in turn, leads to the definition of the amount of knowledge that has been distilled to a student as $\red{Y}{T,S}$. We show that knowledge distillation frameworks which use mutual information between the teacher and the student representations to achieve knowledge distillation have a fundamental problem: These frameworks force the student to mimic the teacher regardless of the usefulness of the teacher's information to perform the task at hand. In contrast, through many examples we demonstrate that the proposed metrics can correctly characterize the amounts of knowledge available for distillation and the already transferred knowledge. Moreover, we show the advantage of the proposed metric by implementing a new distillation framework -- Redundant Information Distillation (RID) -- and comparing its performance with the existing technique VID \cite{variationalKD}. While VID and RID perform similarly when the teacher is well-trained for the downstream task, VID performance degrades largely when the teacher is not trained but RID performs close to a student model trained without distillation.

\textbf{Limitations and future work:} While the RID framework uses an alternative definition for redundant information, computation of exact $\red{Y}{T,S}$ during training can be computationally prohibitive due to the optimization over $\Delta_P$. Moreover, characterizing the extent to which the assumption in Section \ref{sec_REDFrameworks} holds is not explored in this work. Extending the mathematical formulation in Section \ref{sec_REDFrameworks} to analyze other knowledge distillation frameworks is an interesting path for future research. Other potential research directions include: (i) distilling from an ensemble of teachers~\cite{malininDistillingEnsembles} in a way that the adverse effects of corrupted teachers are mitigated; (ii) dataset distillation~\cite{ilia_dataset_distillation}; or (iii) distillation for model reconstruction from counterfactual explanations~\cite{dissanayake2024model}. Incorporating fundamentally different definitions for PID components, such as \cite{lyuExplicitPID} which provides explicit formulae, as regularizers can also be interesting.

\textbf{Acknowledgments:}  This work was supported in part by NSF CAREER Award 2340006 and Northrop Grumman Seed Grant. 

\FloatBarrier
\renewcommand\bibname{\large References}

\printbibliography

\clearpage
\newpage
\appendix
\section{Additional Related Works}
\label{appdx_relatedWorks}
The nature of the teacher model plays an important role in distillation. \cite{kunduSkepticalStudent} present a distillation scheme to distill from a teacher which has been intentionally made difficult-to-distill. They distill the penultimate layer of the teacher to several intermediate representations of the student through auxiliary student branches.  \cite{parkStudentFriendlyTeacher} prepare the teacher during its training phase so that it can later be used for better distillation when training a student. This is done through training an augmented version of the teacher formed by adding several student branches, all of which are trained together. \cite{benItsAllInTheHead} propose two novel frameworks: Teacher-Head Knowledge Distillation (TH-KD) and Student-Head Knowledge Distillation (SH-KD). In TH-KD, the teacher’s classification head is attached to the student and the distillation loss is computed as a weighted sum of the discrepancies of predictions from the two heads of the student with respect to the teacher’s predictions. SH-KD involves three steps: (i) a student is trained in the conventional way; (ii) the classification head of the student is fixed to the teacher and the teacher is trained while keeping the classification head frozen; and (iii) this teacher model is used to train a new student using conventional distillation. The assumption is that the teacher trained with the student classification head will adapt the teacher to transfer the knowledge better suited to the capacity-limited student. \cite{jiaoTinybert} propose an interesting knowledge distillation framework for compressing the well-known BERT models into a much smaller version called TinyBERT. The process consists of two steps. During the first step, a generic TinyBERT is distilled from a pre-trained BERT model using a large corpus. In the second step, this TinyBERT model is further fine-tuned by distilling a fine-tuned BERT on a task-specific dataset. While the broad goal of \cite{jiaoTinybert} seems to align with ours, the approach is quite different: we intend to filter out task-specific information from a generalized teacher by defining a measure that precisely captures this whereas \cite{jiaoTinybert} focus on distilling from a task-specific (i.e., fine-tuned) teacher in an efficient manner. The work on distilling vision transformers by \cite{huangGeneric2SpecificDistill} also proceeds in two steps. Similar to \cite{jiaoTinybert}, the first step focuses on distilling task-agnostic information using a pre-trained teacher within an encoder-decoder setup. In the second step, the decoder is abandoned and the task-specific information is transferred using a fine-tuned teacher. As pointed out in \cite[Section 4.3]{huangGeneric2SpecificDistill}, this framework can be seen as maximizing the mutual information (conditioned on the dataset being used) between the teacher and the student in each step (rather than quantifying task-specific information), and hence, can be conceptually categorized together with VID. In contrast, our RID framework focuses on quantifying and extracting task-specific information from a generalized teacher.

\section{Proofs}
\label{appdx_proofs}

\subsection{Proof of Theorem \ref{thm_NuisanceTeacher}}
\thmNuisanceTeacher*
\begin{proof}
To prove claim 1, observe that 
\begin{align}
    \mi{T}{S} &= H(T)- H(T|S) \\
    &= H(Z, G) - H(Z,G|S) \\
    &= H(Z) + H(G) - H(Z,G|S).
\end{align}
Now, $S=Z \implies H(Z, G|S)=H(G)$ and $S=G \implies H(Z, G|S)=H(Z)$. Therefore,
\begin{equation}
    \mi{T}{S} = \left\{\begin{matrix}
        H(Z) &; & S=Z \\
        H(G) &; & S=G
    \end{matrix}\right..
\end{equation}
Claim 1 follows from the above since $\mi{T}{S} \leq H(S) \leq \max\{H(Z),H(G)\}$.

To prove claim 2, first observe that $\mi{Y}{T}=\mi{Y}{Z} \implies \mi{Y}{G|Z}=0$. Now consider the conditional mutual information $\mi{Y}{T|S}$:
\begin{align}
    \mi{Y}{T|S} &= \mi{Y}{Z,G|S} \\
    &= \mi{Y}{G|S} + \mi{Y}{Z|G,S}
\end{align}
Note that the right-hand side above vanishes when $S=Z$. Therefore, $S=Z\implies\mi{Y}{T|S}=0$. Now since
\begin{equation}
    \red{Y}{T,S} = \mi{Y}{T} - \underbrace{\min_{Q\in \Delta_P} \mi[Q]{Y}{T|S}}_{=0 \text{ with } Q=P \text{ when }S=Z}
\end{equation}
and $\red{Y}{T,S} \leq \mi{Y}{T}$, setting $S=Z$ achieves the maximum $\red{Y}{T,S}$.
\end{proof}

\subsection{Proof of Theorem \ref{thm_properties}}
\label{appdx_proofTheoremProperties}
\thmProperties*
Proof of the second property is given below:
\begin{proof}
    \begin{align}
        \max & \{\mi{Y}{T}, \mi{Y}{S}\} \nonumber\\
        &= \max \Big\{\red{Y}{T,S}+\underbrace{\uni{Y}{T}{S}}_{=0}, \red{Y}{T,S}+\uni{Y}{S}{T} \Big\} \\
        &= \red{Y}{T,S}+\uni{Y}{S}{T}\\
        &= \mi{Y}{S}.
    \end{align}
\end{proof}
The first and the third properties directly follow from  \cite[Lemma 5]{bertschingerUniqueInfo} and \cite[Lemma 31]{uniqueDeficiencies}.

\subsection{Proof of Lemma \ref{lem_conditionalMIFunctions}}
\begin{lemma}
    \label{lem_conditionalMIFunctions}
    Let $Y, T$ and $S$ be any three random variables with supports $\mathcal{Y}, \mathcal{T}$ and $\mathcal{S}$ respectively and $g(\cdot)$ be a deterministic function with domain $\mathcal{S}$. Then
    \begin{equation}
        \mi{Y}{T|g(S), S} = \mi{Y}{T|S}.
    \end{equation}
\end{lemma}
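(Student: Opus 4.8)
The plan is to show that conditioning on the pair $(g(S), S)$ is the same as conditioning on $S$ alone, because $g(S)$ is redundant once we know $S$. The cleanest route is via the $\sigma$-algebra generated by the conditioning variables: since $g$ is a deterministic function, $\sigma(g(S), S) = \sigma(S)$, and conditional mutual information depends on the conditioning variable only through the $\sigma$-algebra it generates. If we prefer to stay at the level of elementary information-theoretic identities (which matches the style of the other proofs in this appendix), I would instead argue as follows.

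First I would expand both sides using the chain rule for mutual information applied to the joint variable $(g(S), S)$. We have
\begin{align}
    \mi{Y}{T, g(S), S} &= \mi{Y}{S} + \mi{Y}{T, g(S) \mid S} \nonumber \\
    &= \mi{Y}{S} + \mi{Y}{g(S) \mid S} + \mi{Y}{T \mid g(S), S}.
\end{align}
Now $\mi{Y}{g(S) \mid S} = 0$, since $g(S)$ is a deterministic function of $S$ and hence carries no additional information about $Y$ (formally, $H(g(S) \mid S) = 0$ implies $\mi{Y}{g(S) \mid S} \le H(g(S) \mid S) = 0$, and mutual information is non-negative). Therefore $\mi{Y}{T, g(S), S} = \mi{Y}{S} + \mi{Y}{T \mid g(S), S}$.

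On the other hand, expanding the same joint mutual information by pulling out $T$ and $S$ first and noting that $(g(S), S)$ and $(T, S)$ generate the same information:
\begin{align}
    \mi{Y}{T, g(S), S} &= \mi{Y}{T, S} + \mi{Y}{g(S) \mid T, S} \nonumber \\
    &= \mi{Y}{S} + \mi{Y}{T \mid S} + \mi{Y}{g(S) \mid T, S},
\end{align}
and again $\mi{Y}{g(S) \mid T, S} = 0$ because $g(S)$ is determined by $S$. Hence $\mi{Y}{T, g(S), S} = \mi{Y}{S} + \mi{Y}{T \mid S}$. Equating the two expressions for $\mi{Y}{T, g(S), S}$ and cancelling $\mi{Y}{S}$ gives $\mi{Y}{T \mid g(S), S} = \mi{Y}{T \mid S}$, as claimed.

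I do not anticipate a serious obstacle here; this is a routine consequence of the chain rule and the fact that a deterministic function of the conditioning variable adds nothing. The only point requiring a little care is the handling of measure-theoretic subtleties if $\mathcal{S}$ is not discrete — but since the paper works throughout with random variables over finite supports $\mathcal{Y}, \mathcal{T}, \mathcal{S}$ (see the PID setup and Definition \ref{def_brojaRedUni}), the elementary argument above suffices, and I would just remark that all entropies and mutual informations are finite so the cancellations are valid.
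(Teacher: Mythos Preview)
Your argument is correct and is essentially the same as the paper's: both expand $\mi{Y}{T,g(S),S}$ via the chain rule in two different orders, use $H(g(S)\mid S)=0$ (equivalently $\mi{Y}{g(S)\mid S}=\mi{Y}{g(S)\mid T,S}=0$) to kill the extra terms, and then equate. The only cosmetic difference is the order in which variables are peeled off; your additional $\sigma$-algebra remark is a fine alternative but not needed here.
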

\begin{proof}
    By applying the mutual information chain rule to $\mi{Y}{T, S, g(S)}$ we get
    \begin{align}
        \mi{Y}{T,& S, g(S)} \nonumber\\
        &= \mi{Y}{S} + \mi{Y}{T|S} + \mi{Y}{g(S)|T,S} \\
        &= \mi{Y}{S} + \mi{Y}{T|S}  + \underbrace{H(g(S)|T,S)}_{=0} - \underbrace{H(g(S)|Y,T,S)}_{=0}\\
        &= \mi{Y}{S} + \mi{Y}{T|S}.
    \end{align}
    Also, from a different decomposition, we get
    \begin{align}
        \mi{Y}{T, S, g(S)} &= \mi{Y}{S} + \underbrace{\mi{Y}{g(S)|S}}_{=0} + \mi{Y}{T|g(S),S} \\
        &= \mi{Y}{S} + \mi{Y}{T|g(S),S}.
    \end{align}
    Combining the two right-hand sides yields the final result.
\end{proof}

\subsection{Proof of Theorem \ref{thm_griffithBrojaRed}}
\thmGriffithBrojaRed*
\begin{proof}
    For a given set of random variables $Y, T$ and $S$, let $f_t^*(T)$ and $f_s^*(S)$ achieve the maximum $\mi{Y}{Q}$ in Definition \ref{def_griffithRed}, i.e., $Red_\cap (Y: T,S)=\mi{Y}{f_t^*(T)}$ while $\mi{Y}{f_t^*(T)|f_s^*(S)}=0$. We first observe that $\red{Y}{f_t^*(T), f_s^*(S)}=Red_\cap (Y: T,S)=\mi{Y}{f_t^*(T)}$ as shown below:
    \begin{align}
        \red{&Y}{f_t^*(T), f_s^*(S)} \nonumber\\
        &= \mi{Y}{f_t^*(T)}-\min_{Q\in\Delta_P} \mi[Q]{Y}{f_t^*|f_s^*(S)} \\
        &= \mi{Y}{f_t^*(T)} \quad (\because \mi{Y}{f_t^*(T)|f_s^*(S)}=0) \\
        &= Red_\cap (Y: T,S).
    \end{align}
    Next, we show that $\red{Y}{f_t^*(T), f_s^*(S)} < \red{Y}{T,S}$. In this regard, we use the following lemma due to \cite{bertschingerUniqueInfo}.

    \begin{restatable}[Lemma 25, \cite{bertschingerUniqueInfo}]{lemma}{lemBrojaUniMonotonicity}
        \label{lem_brojaUniMonotonicity}
        Let $X, Y, Z_1, Z_2\dots,Z_k$ and $Z_{k+1}$ be a set of random variables. Then, 
        \begin{equation}
            \uni{X}{Y}{Z_1, Z_2\dots,Z_k} \geq \uni{X}{Y}{Z_1, Z_2\dots,Z_k,Z_{k+1}}.
        \end{equation}
    \end{restatable}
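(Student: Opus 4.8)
The plan is to prove the monotonicity directly from the variational definition in Definition \ref{def_brojaRedUni}, by taking the optimizer of the \emph{smaller} problem and building from it a feasible point of the \emph{larger} problem whose conditional mutual information is no larger. Write $W = (Z_1, \dots, Z_k)$ and $W' = (Z_1, \dots, Z_k, Z_{k+1}) = (W, Z_{k+1})$, so the claim reads $\uni{X}{Y}{W} \geq \uni{X}{Y}{W'}$. Recall that $\uni{X}{Y}{W} = \min_Q \mi[Q]{X}{Y|W}$ over the set $\Delta_P^W$ of distributions matching the marginals $P_{XY}$ and $P_{XW}$, and analogously $\uni{X}{Y}{W'} = \min_Q \mi[Q]{X}{Y|W'}$ over the set $\Delta_P^{W'}$ matching $P_{XY}$ and $P_{XW'}$.

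First I would fix $Q^*$ achieving the minimum of the smaller problem, so $Q^* \in \Delta_P^W$ and $\mi[Q^*]{X}{Y|W} = \uni{X}{Y}{W}$. Since $Q^*$ lives on $(X,Y,W)$, I extend it to a distribution $\hat Q$ on $(X,Y,W,Z_{k+1})$ by appending $Z_{k+1}$ through the \emph{original} conditional of the true distribution:
\[
\hat Q(x,y,w,z) = Q^*(x,y,w)\, P(Z_{k+1}=z \mid X=x, W=w).
\]
The central point is that $\hat Q$ is feasible for the larger problem, i.e. $\hat Q \in \Delta_P^{W'}$. The $(X,Y)$ marginal is preserved because $\sum_z P(z\mid x,w)=1$, so it agrees with $Q^*_{XY}=P_{XY}$. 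For the $(X,W')$ marginal, summing out $Y$ gives $Q^*_{XW}(x,w)\,P(z\mid x,w) = P_{XW}(x,w)\,P(z\mid x,w) = P_{X,W,Z_{k+1}}(x,w,z)$, where I use that $Q^*$ already matches $P_{XW}$. Thus $\hat Q$ respects the full joint marginal $P_{XW'}$, which is the delicate requirement: feasibility for $W'$ demands matching the \emph{joint} of $W$ and $Z_{k+1}$ with $X$, not merely the two marginals separately.

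Finally I would exploit the structure that, by construction, $\hat Q(z\mid x,y,w)=P(z\mid x,w)$ does not depend on $y$, so $Y \perp Z_{k+1} \mid (X,W)$ under $\hat Q$, i.e. $\mi[\hat Q]{Y}{Z_{k+1}|X,W}=0$. Expanding $\mi[\hat Q]{X,Z_{k+1}}{Y|W}$ two ways by the chain rule then gives $\mi[\hat Q]{X}{Y|W} = \mi[\hat Q]{Z_{k+1}}{Y|W} + \mi[\hat Q]{X}{Y|W,Z_{k+1}}$, and since the first summand is non-negative, $\mi[\hat Q]{X}{Y|W'} \leq \mi[\hat Q]{X}{Y|W}$. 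Because marginalizing $Z_{k+1}$ out of $\hat Q$ returns $Q^*$ on $(X,Y,W)$, we have $\mi[\hat Q]{X}{Y|W} = \mi[Q^*]{X}{Y|W} = \uni{X}{Y}{W}$, while feasibility of $\hat Q$ gives $\uni{X}{Y}{W'} \leq \mi[\hat Q]{X}{Y|W'}$. Chaining these inequalities yields $\uni{X}{Y}{W'} \leq \uni{X}{Y}{W}$, as desired. I expect the feasibility verification to be the main obstacle, since the construction via $P(Z_{k+1}\mid X,W)$ is precisely what simultaneously realizes the correct joint marginal and enforces the conditional independence driving the final inequality; a naive extension matching only $P_{XW}$ and $P_{XZ_{k+1}}$ would not lie in $\Delta_P^{W'}$.
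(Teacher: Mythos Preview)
Your argument is correct. The construction $\hat Q(x,y,w,z)=Q^*(x,y,w)\,P(z\mid x,w)$ is exactly what is needed: it simultaneously restores the true $(X,W')$ marginal (because $Q^*$ already matches $P_{XW}$) and forces $Y\perp Z_{k+1}\mid (X,W)$ under $\hat Q$, so the chain-rule step $\mi[\hat Q]{X}{Y\mid W'}\le \mi[\hat Q]{X}{Y\mid W}=\mi[Q^*]{X}{Y\mid W}$ goes through, and feasibility of $\hat Q$ in $\Delta_P^{W'}$ closes the inequality. The only cosmetic omission is the degenerate case $P(X{=}x,W{=}w)=0$, where the conditional $P(z\mid x,w)$ is undefined; but then $Q^*(x,y,w)=0$ as well, so any choice yields the same $\hat Q$, and nothing is affected.

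As to comparison with the paper: the paper does not give its own proof of this lemma. It is stated as Lemma~\ref{lem_brojaUniMonotonicity} with attribution to \cite{bertschingerUniqueInfo} and then invoked as a black box inside the proof of Theorem~\ref{thm_griffithBrojaRed}. Your variational extension argument is the standard route to this monotonicity and matches in spirit the original derivation in \cite{bertschingerUniqueInfo}, so there is no substantive divergence to discuss.
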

    Consider the set of random variable $Y, f_t^*(T), f_s^*(S)$ and $S$. From the above lemma we get 
    \begin{align}
        \uni{Y&}{f_t^*(T)}{f_s^*(S)} \nonumber \\
        &\geq \uni{Y}{f_t^*(T)}{f_s^*(S), S} \\
        &= \mi{Y}{f_t^*(T)} - \mi[Q^*]{Y}{f_t^*(T)|f_s^*(S), S}
    \end{align}
    where $Q^*=\arg\min_{Q\in\Delta_P} \mi[Q^*]{Y}{f_t^*(T)|f_s^*(S), S}$. Now, by applying Lemma \ref{lem_conditionalMIFunctions} to the right-hand side we arrive at
    \begin{align}
        \uni{Y&}{f_t^*(T)}{f_s^*(S)} \nonumber\\
        &\geq \mi{Y}{f_t^*(T)} - \mi[Q^*]{Y}{f_t^*(T)|S}\\
        &= \uni{Y}{f_t^*(T)}{S}.
    \end{align}
    Next, observe that the following line arguments holds from Definition \ref{def_brojaRedUni}:
    \begin{align}
        &\uni{Y}{f_t^*(T)}{f_s^*(S)} \geq \uni{Y}{f_t^*(T)}{S} \\
        \iff & \mi{Y}{f_t^*(T)} - \uni{Y}{f_t^*(T)}{f_s^*(S)} \leq \mi{Y}{f_t^*(T)} - \uni{Y}{f_t^*(T)}{S}\\
        \iff &\red{Y}{f_t^*(T), f_s^*(S)} \leq \red{Y}{f_t^*(T), S}.
    \end{align}
    Noting that $\red{Y}{A,B}$ is symmetric w.r.t. $A$ and $B$, we may apply the previous argument to the pair $\red{Y}{f_t^*(T), S}$ and $\red{Y}{T, S}$ to obtain
    \begin{equation}
        \red{Y}{f_t^*(T), f_s^*(S)} \leq \red{Y}{f_t^*(T), S} \leq \red{Y}{T,S},
    \end{equation}
    concluding the proof.
\end{proof}

\section{VID And TED Frameworks}
\label{appdx_vidAndTed}
\subsection{Variational Information Distillation (VID)}
The VID framework \cite{variationalKD} is based on maximizing a variational lower bound to the mutual information $\mi{T}{S}$. It finds a student representation $S$ which minimizes the following loss function:
\begin{equation}
    \label{eq_vidLoss}
    \mathcal{L}_{VID}(\eta_s, \mu) = \mathcal{L}_{CE}(Y, \hat{Y}_{\eta_s}) + \lambda \sum_{c=1}^C \sum_{h=1}^H \sum_{w=1}^W \Bigg( \log \sigma_c + \expt[P_X]{\frac{(T_{c,h,w}-\mu_{c,h,w}(S_{\eta_s}))^2}{2\sigma_c^2}}\Bigg).
\end{equation}
Here, $C, H$ and $W$ are the number of channels, height and width of the representation $T$ respectively (i.e., $T\in\mathbb{R}^{C\times H\times W}$). $\mu$ is a deterministic function parameterized using a neural network and learned during the training process. $\sigma=[\sigma_1,\dots,\sigma_c]^T$ is a vector of independent positive parameters, which is also learned during the training process. $\hat{Y}_{\eta_s}$ is the final prediction of the student model of the target label $Y$.

\subsection{Task-aware Layer-wise Distillation (TED)}
The TED framework \cite{taskAware} fine-tunes a student in two stages. During the first stage, task-aware filters appended to the teacher and the student are trained with task-related heads while the student and the teacher parameters are kept constant. In the next step, the task-related heads are removed from the filters and the student is trained along with its task-aware filter while the teacher and its task-aware filter is kept unchanged. We observe that each of these steps implicitly maximizes the redundant information under Definition \ref{def_griffithRed}. To see the relationship between the TED framework and  the above definition of redundant information, let $Q$ be parameterized using the teacher's task-aware filter as $Q=f_t(T)$. Now consider the first stage loss corresponding to the teacher's task-aware filter which is given below:
\begin{equation}
    \label{eq_taskAwareStage1}
    \mathcal{L}_t\left(T,\theta_t\right) = \expt[x\sim\mathcal{X}]{\ell(f_t(T;\theta_t))}.
\end{equation}
Here, $\ell(\cdot)$ is the task specific loss, $f_t$ is the task-aware filter parameterized by $\theta_t$. During the first stage, this loss is minimized over $\theta_t$. A similar loss corresponding to the student (i.e., $\expt[x\sim\mathcal{X}]{\ell(f_s(S;\theta_t))}$) is minimized in order to train the student's task aware filter. Note that during this process, both $\mi{Y}{f_t(T)}$ and $\mi{Y}{f_s(S)}$ are increased.

During stage 2, the distillation loss which is given below is minimized over $\theta_s$ and $S$ while $\theta_t$ and $T$ being held constant.
\begin{equation}
    \label{eq_taskAwareStage2}
    \mathcal{D}_{TED}\left(T,S\right) = \expt[x\sim\mathcal{X}]{||f_t(T;\theta_t)-f_s(S;\theta_s)||^2}.
\end{equation}
Consider stage 2 as an estimation problem which minimizes the mean square error, where $Q=f_t(T)$ is the estimand and $f_s(\cdot)$ is the estimator. We observe that this optimization ensures $\mi{Y}{Q|f_s(S)}=0$ given that the same assumption as in Section \ref{sec_REDFrameworks} holds. Following similar steps as in Section \ref{sec_REDFrameworks}, we see that TED framework maximizes a lower bound for the transferred knowledge, quantified as in Definition \ref{def_distilledKnowledge}.

The main difference of this scheme w.r.t. the RID framework is two-fold. First, in RID we optimize $f_t(\cdot)$ in addition to $f_s(\cdot)$ and $S$ during stage 2. In contrast, TED does not modify the teacher's filter during the second stage. Second, RID distillation loss employs a weighting parameter similar to that of VID.

\section{Empirical Validation}
\label{appdx_experiments}
\subsection{Datasets} 
We use the CIFAR-10 dataset \cite{cifar10Dataset} with 60,000 32x32 colour images belonging to 10 classes, with 6,000 images per class. The training set consists of 50,000 images (5,000 per class) and the test set is 10,000 images (1,000 per class). The PID values are evaluated over the same test set. Similarly, CIFAR-100 dataset \cite{cifar100Dataset} contains 32x32 color images belonging to 100 classes with 600 images from each class. The dataset is split into training and test sets with 500 and 100 images per class in each split, respectively. While this split was used to train the teacher, the students were trained only on a subset of training data with only 100 samples per class. For the transfer learning setup, a teacher model initialized with pre-trained weights for the ImageNet dataset \cite{imageNetDataset} was used. Students were distilled using the CUB-200-2011 dataset \cite{cub200Dataset}. The ImageNet dataset contains 1,281,167 training images, 50,000 validation images and 100,000 test images belonging to 1,000 general object classes. The CUB-200-2011 dataset contains 11,768 images of 200 bird species.

\subsection{Models and hyperparameters} 
\textbf{CIFAR-10:} Teacher models are WideResNet-(40,2) and the student models are WideResNet-(16,1). For the VID distillation, the value for $\lambda$ was set to 100. Learning rate was 0.05 at the beginning and was reduced to 0.01 and 0.002 at 150\textsuperscript{th} and 200\textsuperscript{th} epochs respectively. Stochastic Gradient Descent with a weight decay=0.0005 and momentum=0.9 with Nesterov momentum enabled was used as the optimizer. We choose three intermediate layers; the outputs of the second, third and the fourth convolutional blocks of both the teacher and student models. The function $\mu(\cdot)$ for each layer is parameterized using a sequential model with three convolutional layers, ReLU activations and batch normalization in between the layers. A similar architecture and a training setup was used for the baseline (BAS, no distillation) and the RID models. In case of the RID models, the filters $f_s(\cdot)$ and $f_t(\cdot)$ were parameterized using 2-layer convolutional network with a batch normalization layer in the middle. The classification head $g_t(\cdot)$ is a linear layer. We set $n_w=30, q=30, r=1/4$ and the total number of epochs $n+n_w=300$. Teacher, Baseline and VID models are trained for 300 epochs. In both cases of VID and RID, the independent parameter vector $\sigma$ has a dimension equal to the number of channels in the outputs of functions $\mu, f_s$ or $f_t$. All the training was carried out on a computer with an AMD Ryzen Threadripper PRO 5975WX processor and an Nvidia RTX A4500 graphic card. The average training time per model is around 1.5 hours.

\textbf{CIFAR-100:} Teacher models are WideResNet-(28,10) and the student models are WideResNet-(16,8). For the VID distillation, the value for $\lambda$ was set to 100. Learning rate was 0.1 at the beginning and was multiplied by a factor of 0.2 at 60\textsuperscript{th}, 120\textsuperscript{th} and 160\textsuperscript{th} epochs respectively. Stochastic Gradient Descent with a weight decay=0.0005 and momentum=0.9 with Nesterov momentum enabled was used as the optimizer. We choose three intermediate layers; the outputs of the second, third and the fourth convolutional blocks of both the teacher and student models. The function $\mu(\cdot)$ for each layer is parameterized using a sequential model with three convolutional layers, ReLU activations and batch normalization in between the layers. A similar architecture and a training setup was used for the baseline (BAS, no distillation) and the RID models. In case of the RID models, the filters $f_s(\cdot)$ and $f_t(\cdot)$ were parameterized using 2-layer convolutional network with a batch normalization layer in the middle. The classification head $g_t(\cdot)$ is a linear layer. We set $n_w=5, q=5, r=1/5$ and the total number of epochs $n+n_w=250$. Teacher, Baseline and VID models are trained for 250 epochs. In both cases of VID and RID, the independent parameter vector $\sigma$ has a dimension equal to the number of channels in the outputs of functions $\mu, f_s$ or $f_t$. All the training was carried out on a computer with an AMD EPYC 7763 64-Core processor and an Nvidia RTX A6000 Ada graphic card. The average training time per model is around 3.75 hours.

\textbf{ImageNet $\rightarrow$ CUB-200-2011:} The teacher is a ResNet-34 model initialized with weights pre-trained on ImageNet (from Torchvision). The students are ResNet-18 models. For the VID distillation, the value for $\lambda$ was set to 100. Learning rate was 0.1 at the beginning and was multiplied by a factor of 0.2 at 150\textsuperscript{th} and 200\textsuperscript{th} epochs respectively. Stochastic Gradient Descent with a weight decay=0.0005 and momentum=0.9 with Nesterov momentum enabled was used as the optimizer. We choose two intermediate layers; the outputs of the third and the fourth convolutional blocks of both the teacher and student models. The function $\mu(\cdot)$ for each layer is parameterized using a sequential model with two convolutional layers, ReLU activations and batch normalization in between the layers. A similar architecture and a training setup was used for the baseline (BAS, no distillation) and the RID models. In case of the RID models, the filters $f_s(\cdot)$ and $f_t(\cdot)$ were parameterized using 2-layer convolutional network with a batch normalization layer in the middle. The classification head $g_t(\cdot)$ is a linear layer. We set $n_w=20, q=20, r=1/5$ and the total number of epochs $n+n_w=250$. Teacher, Baseline and VID models are trained for 250 epochs. In both cases of VID and RID, the independent parameter vector $\sigma$ has a dimension equal to the number of channels in the outputs of functions $\mu, f_s$ or $f_t$. All the training was carried out on a computer with an AMD Ryzen Threadripper PRO 5975WX processor and an Nvidia RTX A4500 graphic card. The average training time per model is around 7 hours.


\subsection{PID computation} We compute the PID components of the joint information of innermost distilled layers $\mi{Y}{T,S}$, using the framework proposed in \cite{liangPIDCompute} as follows:
\begin{enumerate}[itemsep=0pt, topsep=0pt]
    \item Representations are individually flattened
    \item Compute $n_\text{PCA}$-component PCA on each set of representations
    \item Cluster representations into $n_{C}$ clusters to discretize
    \item Compute the joint distribution $p(Y,T,S)$
    \item Compute PID components using the joint distribution
\end{enumerate}
For CIFAR-10, we use $n_\text{PCA}=n_C=10$. For CIFAR-100, we use $n_\text{PCA}=5$ and $n_C=3$. For the transfer learning setting with ImageNet and CUB-200-2011 the computation was infeasible due to the intermediate representation dimensionality being extremely large.

\subsection{Results}
Table \ref{tab_cifar100Results} presents the accuracies of each framework over CIFAR-100 dataset. Table \ref{tab_transferLearningResults} presents the results of the transfer learning experiment conducted using a teacher trained on ImageNet and students distilled and evaluated over the CUB-200-2011 dataset. In each scenario, RID exhibits more resilience to the untrained teacher compared to VID.

\begin{minipage}{.45\textwidth}
    \centering
    \captionof{table}{Accuracy of each framework on CIFAR-100 dataset when the teacher is trained vs. not trained. A subset of the training data (100 samples per class) was used for the distillation.}
    \begin{tabular}{ccc}
        \toprule
        \textbf{Framework} & \textbf{Trained} & \textbf{Untrained}\\
        \midrule
        RID & 50\% & 41\% \\
        VID & 67\% & 20\% \\
        BAS & 43\% & 43\% \\
        \bottomrule
    \end{tabular}
    \label{tab_cifar100Results}
\end{minipage}
\hspace{0.05\textwidth}
\begin{minipage}{.45\textwidth}
    \centering
    \captionof{table}{Accuracy of each framework in the transfer learning setting (ImageNet teacher $\rightarrow$ CUB-200-2011 student) when the teacher is trained vs. not trained.}
    \begin{tabular}{ccc}
        \toprule
        \textbf{Framework} & \textbf{Trained} & \textbf{Untrained}\\
        \midrule
        RID & 65\% & 33\% \\
        VID & 70\% & 11\% \\
        BAS & 36\% & 36\% \\
        \bottomrule
    \end{tabular}
    \label{tab_transferLearningResults}
\end{minipage}
\end{document}